\newtheorem{property}{Property}
\definecolor{corabg}{RGB}{240,248,255} 
\definecolor{citebg}{RGB}{255,240,245} 
\definecolor{pubmbg}{RGB}{240,255,240} 
\newtheorem{lemma}{Lemma}
\newtheorem{theorem}{Theorem}
\newtheorem{corollary}{Corollary}
\newtheorem{definition}{Definition}
\newtheorem{remark}{Remark}
\title{Adaptive Initial Residual Connections for GNNs with Theoretical Guarantees}
\author{
    Mohammad Shirzadi, 
    Ali Safarpoor Dehkordi, 
    Ahad N. Zehmakan
}
\begin{document}

\maketitle

\begin{abstract}
Message passing is the core operation in graph neural networks, where each node updates its embeddings by aggregating information from its neighbors. However, in deep architectures, this process often leads to diminished expressiveness. A popular solution is to use residual connections, where the input from the current (or initial) layer is added to aggregated neighbor information to preserve embeddings across layers. Following a recent line of research, we investigate an adaptive residual scheme in which different nodes have varying residual strengths. We prove that this approach prevents oversmoothing; particularly, we show that the Dirichlet energy of the embeddings remains bounded away from zero. This is the first theoretical guarantee not only for the adaptive setting, but also for static residual connections (where residual strengths are shared across nodes) with activation functions. Furthermore, extensive experiments show that this adaptive approach outperforms standard and state-of-the-art message passing mechanisms, especially on heterophilic graphs. To improve the time complexity of our approach, we introduce a variant in which residual strengths are not learned but instead set heuristically, a choice that performs as well as the learnable version. 
\end{abstract}

\section{Introduction}

Neural message passing~\cite{gilmer2017neural} serves as the foundation of modern graph representation learning, providing a core mechanism for aggregating neighborhood information in graph-structured data. This principle underlies Graph Convolutional Networks (GCNs)~\cite{kipf2016semi} and their variants, such as GraphSAGE~\cite{hamilton2017inductive_table} and Graph Attention Networks (GAT)~\cite{velivckovic2017graph}, which learn node embeddings through iterative feature aggregation and transformation. 

The versatility of graph neural networks (GNNs) has led to their successful deployment across numerous domains, for example, accurate traffic prediction in transportation systems~\cite{van2024towards}, pandemic analysis in public health~\cite{panagopoulos2021transfer,sehanobish2021gaining}, economic forecasting~\cite{ye2021coupled}, and social network analysis~\cite{hevapathige2025deepsn,hevapathige2025graph,dehkordi2025graph}.

While powerful, these architectures face inherent challenges, such as oversmoothing, a phenomenon in which network propagation results in indistinguishable node embeddings~\cite{oono2019graph}. This phenomenon may even occur in shallow networks~\cite{wu2022non}. A common strategy to mitigate oversmoothing in GNNs is the use of residual connections~\cite{gasteiger2018predict,huang2019residual,chen2020simple}, a technique that adds the input from the previous layers to the aggregated neighbor information (e.g., through summation or concatenation) to preserve information across layers. This approach has been shown to improve training and performance in deep learning architectures such as ResNets~\cite{he2016deep}.

Residual connections may combine the current embeddings with the initial node embeddings, forming initial residual connections (IRC). More generally, they may combine the current embeddings with the outputs from previous layers, which we refer to as residual connections (RC). In this work, we focus on IRC. Earlier works on IRC typically treat the residual strengths as a fixed hyperparameter shared across all nodes, referred to as \textit{static} IRC for simplicity (see, e.g.,~\cite{chen2020simple,scholkemper2024residual}).
To further unlock the power of IRC, some research has considered a more dynamic and adaptive approach, cf.~\cite{zhang2023drgcn}. Following this line of work, in this paper, we study the \textit{adaptive IRC}, in which each node is permitted to have a personalized residual strength.
Our primary goal is to theoretically and experimentally investigate the ability of adaptive IRC to preserve feature information across layers and achieve superior accuracy in downstream tasks.

To assess whether a message passing mechanism retains the ability to distinguish node embeddings as the network depth increases, a popular choice is the rank of the node embedding matrix~\cite{daneshmand2020batch}. A higher rank indicates that the message passing mechanism preserves distinct information across nodes, enabling richer representations. However, we note that rank alone is insufficient to diagnose oversmoothing, even in oversmoothing regimes, because numerical rank may remain full due to infinitesimal differences (though the effective rank collapses). 
For a more reliable diagnosis, effective rank or energy decay of the embedding matrix must be used to complement standard rank analysis. Hence, we also use the Dirichlet energy (see mean average distance~\cite{chen2020measuring}, spectral rank~\cite{zhang2025rethinking}, and normalized node similarities~\cite{chen2025residual} for alternative equivalent measures), which quantifies the smoothness of node embeddings. If it decays to zero with depth, the embeddings become indistinguishable, indicating a loss of expressiveness. Conversely, bounded energy implies that the mechanism maintains discriminative power. Unlike rank, energy is a continuous quantifier~\cite{horn2012matrix}, making it robust for tracking the degree of smoothing. 

Now that our message passing mechanism and our measures of oversmoothing are established, we can summarize our main contributions as follows: 
\begin{itemize}   
        \item \textbf{Theoretical Guarantee}: We provide a theoretical analysis demonstrating that the adaptive IRC framework avoids oversmoothing, even in deep architectures. Specifically, we show that the Dirichlet energy of the node embeddings in the adaptive IRC remains bounded away from zero, ensuring that the embeddings retain their discriminative power. As a special case, we prove that static IRC with activation functions mitigates oversmoothing, extending the result of the prior work~\cite{scholkemper2024residual}, which considered only the linear case without activation functions. We also prove that, through the aggregation process, in the adaptive IRC mechanism, the rank of the embedding matrix is fully preserved.
         \item \textbf{Enhanced GNN Performance}: Through extensive experiments on benchmark datasets, we demonstrate that our framework consistently outperforms state-of-the-art GNNs. Our results highlight the framework's ability to learn robust and discriminative node embeddings while maintaining scalability, as the node-specific residual strength can be efficiently learned during training.
         
        \item \textbf{Adaptive IRC with Non-learnable Residual Strengths}: To reduce the time complexity of our approach, we propose a variant of adaptive IRC where residual strengths are not learned, but instead assigned heuristically based on node centrality. In particular, we use PageRank scores to rank the nodes, assigning a higher fixed residual strength to a small fraction of top-ranked nodes and a lower fixed value to the rest. This simple PageRank-based strategy performs as well as the fully learnable variant while significantly reducing computational overhead.
\end{itemize}

\subsection{Preliminaries}

Let $G = (V, E, \mathbf{A})$ be an undirected weighted graph with a node set $V$ of cardinality $|V| = n$, an edge set $E$, and a weight matrix $\mathbf{A} \in \mathbb{R}^{n \times n}$ where $[\mathbf{A}]_{ij} = a_{ij}$ represents the weight of the edge between nodes $i$ and $j$. If no edge exists between nodes $i$ and $j$, $a_{ij} = 0$. The node set is equal to $V=\{1, 2,\cdots, n\}$, and the one-hop neighborhood of node $i$ is denoted as $N_i = \{ j \in V \mid (i, j) \in E \}$. Each node is associated with a feature vector of size $1 \times d$, and the feature matrix for all nodes is denoted by $\textbf{X} \in \mathbb{R}^{n \times d}$, where $d$ is the dimension of the feature. 

The degree matrix, $\mathbf{D} \in \mathbb{R}^{n \times n}$, is a diagonal matrix where $[\mathbf{D}]_{ii}:=d_i = \sum_{j \in V} a_{ij}$ represents the weighted degree of node $i$, i.e., the sum of weights of edges from node $i$ to its neighbors. 
Considering the Laplacian matrix $\mathbf{L} = \mathbf{D} - \mathbf{A}$, the normalized Laplacian matrix $\bm{\mathcal{L}}$ is defined as
\[
\bm{\mathcal{L}} = \mathbf{D}^{-1/2} \mathbf{L} \mathbf{D}^{-1/2} = \mathbf{D}^{-1/2} (\mathbf{D} - \mathbf{A}) \mathbf{D}^{-1/2} = \mathbf{I} - \bm{\mathcal{A}},
\]
where $\bm{\mathcal{A}} := \mathbf{D}^{-1/2} \mathbf{A} \mathbf{D}^{-1/2}$ is the normalized adjacency matrix. Normalizing the adjacency matrix ensures that its largest eigenvalue is $1$, leading to the following spectral properties. Let $\gamma_1 \leq \cdots \leq \gamma_n$ be the eigenvalues of $\bm{\mathcal{L}}$, then $0 = \gamma_1 \leq \cdots \leq \gamma_n \leq 2$. Let $\alpha_1 \geq \cdots \geq \alpha_n$ be the eigenvalues of $\bm{\mathcal{A}}$, then $1 = \alpha_1 \geq \cdots \geq \alpha_n \geq -1$; see~\cite{chung1997spectral} for more details. 

Let $\textbf{M} \in \mathbb{R}^{n \times m}$ be a matrix with $n, m \in \mathbb{N}^+$ and $\text{rank}(\textbf{M})=r$. Then, the singular value decomposition of $\textbf{M}$ is given by $\textbf{M} = \textbf{U} \bm{\Sigma} \textbf{V}^{\top}$, where the columns of $\textbf{U} \in \mathbb{R}^{n \times n}$ and $\textbf{V} \in \mathbb{R}^{m \times m}$ are orthogonal (i.e., $\textbf{U}^{\top} \textbf{U} =\textbf{I} \in \mathbb{R}^{n \times n}$, $\textbf{V}^{\top} \textbf{V} =\textbf{I} \in \mathbb{R}^{m \times m}$) and $\bm\Sigma \in \mathbb{R}^{n \times m}$ is zero everywhere except for entries on the main diagonal where the $(j,j)$ entry is $\sigma_j$ for $j=1,\cdots,\min\{m,n\}$ and 
\[
\sigma_1 \geq \sigma_2 \geq \cdots \geq \sigma_r > \sigma_{r+1}=\cdots=\sigma_{\min\{ m,n \}}=0.
\]
Denoting the columns of $\textbf{U}$ and $\textbf{V}$ as $\textbf{u}_1,\cdots,\textbf{u}_n$ and $\textbf{v}_1,\cdots,\textbf{v}_m$, we can write $\textbf{M}=\sum_{j=1}^r \sigma_j \textbf{u}_j \textbf{v}_j^{\top}$. The decomposition satisfies $\textbf{M} \mathbf{v}_i = \sigma_i \mathbf{u}_i$ and $\textbf{M}^{\top} \mathbf{u}_i = \sigma_i \mathbf{v}_i$ for $i = 1, \dots, r$. The Frobenius norm of $\textbf{M}$ is $\|\textbf{M}\|_F^2 = \sum_{i=1}^r \sigma_i^2$, and the spectral norm is $\|\textbf{M}\|_2^2 = \sigma_1$ (see ~\cite{horn2012matrix,davydov2021meshless} for more details).

\textit{The column space} of a matrix $\mathbf{M} \in \mathbb{R}^{n \times m}$, denoted by $\mathcal{R}(\mathbf{M})$, represents all possible linear combinations of its columns and is formally defined as $\mathcal{R}(\mathbf{M}) = \{ \mathbf{M}\mathbf{x}: \mathbf{x} \in \mathbb{R}^m \} \subseteq \mathbb{R}^n$.
Similarly, \textit{the row space} $\mathcal{R}(\mathbf{M}^{\top})$ captures all linear combinations of the rows of $\mathbf{M}$ and can be expressed as $\mathcal{R}(\mathbf{M}^{\top}) = \{ \mathbf{M}^{\top}\mathbf{y} : \mathbf{y} \in \mathbb{R}^n \} \subseteq \mathbb{R}^m$. There is a connection between these spaces and singular vectors. Specifically, if $\mathbf{M}$ has rank $r$, its column and row spaces are spanned by the singular vectors as 
\[
\mathcal{R}(\mathbf{M}) = \operatorname{span}\{\mathbf{u}_1, \dots, \mathbf{u}_r\},\  \mathcal{R}(\mathbf{M}^{\top}) = \operatorname{span}\{\mathbf{v}_1, \dots, \mathbf{v}_r\},
\]
where $\{\mathbf{u}_i\}$ and $\{\mathbf{v}_i\}$ are the left and right singular vectors, respectively. This result is known as the fundamental theorem of linear algebra, the singular value decomposition version~\cite{trefethen2022numerical}. Furthermore, \textit{the null space} of matrix $\textbf{M}$ is denoted by $\mathcal{N}(\textbf{M})=\{ \textbf{x }\in \mathbb{R}^m: \textbf{M} \textbf{x}=\textbf{0}\}$.

\subsection{Adaptive Initial Residual Connection}
We consider the following message passing, called \textbf{adaptive IRC}: \begin{equation}\label{GRC}
\bm{H}^{(\ell+1)} = \sigma \bigg(\bm{\Lambda} \bm{\mathcal{A}} \bm{H^{(\ell)}} {\bm{W}^{(\ell)}} + (\textbf{I}-\bm{\Lambda}) \bm{H}^{(0)} \bm{\Theta}^{(\ell)}\bigg)
\end{equation}
where $\bm{H}^{(\ell)} \in \mathbb{R}^{n \times d_l}$ represents the node embedding matrix at layer $\ell$, with $n$ being the number of nodes and $d_l$ being the dimensionality of the node embeddings at this layer. Specifically, $\bm{H}^{(0)}$ corresponds to the initial node feature matrix, which is the input to the network. Also, $ \sigma(\cdot)$\footnote{
We use $\sigma$ to denote the activation function and $\sigma_i$ to denote singular values indexed by $i$.
} denotes a nonlinear activation function, such as ReLU or sigmoid, applied element-wise to its input. $\bm{\Lambda}=\text{diag}(\lambda_1,\lambda_2,\cdots,\lambda_n)$ is the \textit{residual strength diagonal matrix} where entries $\lambda_i \in (0, 1)$ determine the weight assigned to the neighborhood aggregation for each node vs its own initial embedding. Through the paper, we assume $\lambda_{\min}$ and $\lambda_{\max}$ are, respectively, the minimum and maximum values of $\lambda_i$, $i=1,\cdots,n$. $\bm{W}^{(\ell)}, \bm{\Theta}^{(\ell)} \in \mathbb{R}^{d_l \times d_{\ell+1}}$ are learnable weight matrices at layer $l$, which transform the aggregated neighborhood embeddings and the initial node features, respectively and \( \textbf{I} \) is the identity matrix. 

It is worth noting that setting $\bm{\Lambda} = \mathbf{I}$ will recover the vanilla GCN~\cite{kipf2016semi}, where no residual term is present. Moreover, when $\bm{\Lambda} = \beta \mathbf{I}$ for some $\beta \in (0,1)$, meaning all nodes share the same residual connection strength, the static IRC model~\cite{gasteiger2018predict,scholkemper2024residual} is recovered. \textit{It's worth emphasizing that the intuition behind design~\eqref{GRC} is that, instead of relying solely on aggregating information from neighboring nodes, each node adaptively retains a learnable portion of its initial embedding, an idea inspired by the Friedkin-Johnsen opinion dynamics model~\cite{friedkin1990social,shirzadi2025opinion,shirzadi2025stubborn}.}

To ensure generalization to unseen nodes, we define $\bm{\Lambda}$ based on the initial node features rather than learning a separate value for each node. Specifically, we set $\bm{\Lambda} = \text{diag}\left(\sigma\left( \bm{H}^{(0)} \bm{W}_{\text{att}} \right) \right)$, where $\bm{W}_{\text{att}} \in \mathbb{R}^{d_0 \times 1}$. This applies a fully connected layer followed by a sigmoid function to each node's initial features, producing residual strengths in the range $0$ to $1$. As the weights are shared and input-dependent, this formulation enables flexible and generalizable $\bm{\Lambda}$ values for unseen nodes.

\subsection{Related Works}

\paragraph{Message Passing in GNNs.} The earliest GNN architectures drew inspiration from spectral graph theory~\cite{defferrard2016convolutional}, utilizing graph Fourier transforms to extract structural patterns in the spectral domain. This feature extraction can be performed either discretely, as seen in GCN~\cite{kipf2016semi}, GraphSAGE~\cite{hamilton2017inductive_table}, and GAT~\cite{velivckovic2017graph}, or continuously through diffusion PDEs, such as in graph neural diffusion~\cite{chamberlain2021grand,thorpe2022grand++} and Allen-Cahn message passing~\cite{wang2022acmp}. Continuous message passing is inspired by the framework of neural differential equations~\cite{chen2018neural}, which has led to many follow-up works in the GNN field~\cite{avelar2019discrete,poli2019graph,wu2023difformer,rusch2022graph,gallicchio2020fast}.


\paragraph{Oversmoothing.} A major challenge for GNNs is their limited depth. As layers increase, models like GCN~\cite{oono2019graph} and GAT~\cite{wang2019improving,wu2023demystifying,dong2021attention} often suffer performance degradation due to repeated neighborhood averaging, which makes node embeddings increasingly similar and eventually indistinguishable. The phenomenon, first noted by~\cite{li2018deeper}, arises from repeated Laplacian smoothing that drives embeddings in a connected graph toward uniform values. Later studies~\cite{oono2019graph,cai2020note} further showed that the embedding energy decays to zero with depth.

\paragraph{Residual Connection.} Motivated by the great success of residual neural networks in conventional deep learning~\cite{he2016deep}, there has been a growing interest in incorporating RC in GNNs. An early example is provided by~\cite {li2019deepgcns}, where the authors demonstrated that RC leads to significant improvements in experiments.~\cite{liu2021graph} provided a message passing with an adaptive embedding aggregation and RC. \cite{yang2022difference,chen2023lsgnn} proposed difference RC, a method that helps GNNs focus only on the remaining useful information (the difference between input and output) at each layer. This prevents the loss of important details when stacking multiple layers.

\paragraph{Initial Residual Connection.} GCNII~\cite{chen2020simple} demonstrated the effectiveness of IRC combined with identity mapping, enabling deeper architectures while preserving model performance. The study by~\cite{scholkemper2024residual} demonstrated that IRC, in GNNs without activation functions, can mitigate oversmoothing when measured by mean average distance. The adaptive IRC with learnable residual strength through different layers has been studied by~\cite{zhang2023drgcn}, which is closely related to our work. However, unlike~\cite{zhang2023drgcn}, our adaptive message passing has significantly lower complexity. Firstly, in our framework, the personalized residual strengths are shared across layers, reducing the number of learnable parameters. Furthermore, we propose a PageRank-based heuristic variant of the framework, which further reduces complexity. More importantly, we provide a theoretical guarantee that adaptive IRC mitigates oversmoothing, in terms of Dirichlet energy, which was observed only experimentally by~\cite{zhang2023drgcn} (for a more complex variant). As a byproduct of these results, we extend the theoretical results of~\cite{scholkemper2024residual} on static IRC.

Similar methods that aggregate not just initial node features but also a combination of other layer embeddings show satisfactory performance, as seen in Jumping Knowledge Networks (JKNets)~\cite{xu2018representation}, Deep Adaptive Graph Neural Networks (DAGNNs)~\cite{liu2020towards}, R--SoftGraphAI~\cite{li2024curriculum} and GODNF~\cite{hevapathige2025graph}.

\section{Theoretical Foundations}
In this section, we present two key theoretical results concerning the expressive power of the message passing framework defined in Equation~\eqref{GRC}. First, for the simplified case without activation functions or linear transformations, Theorem~\ref{rank_preservation_thorem} establishes that the embedding matrix maintains its initial rank throughout propagation. This result demonstrates that the adaptive IRC framework effectively prevents dimensional collapse in the embedding space. Furthermore, Theorem~\ref{theorem_dirichlet_energy} reveals that the energy function of the message passing~\eqref{GRC} (without any simplification, i.e., in the presence of nonlinearity and weights for different layers) remains bounded away from zero as the network depth increases.

\subsection{Rank Preservation in Simplified Adaptive IRC}

We begin our theoretical analysis by considering a simplified setup, where we remove the intermediate activation functions and linear transformations to isolate the core averaging behavior of message passing. This relaxation leads to the following propagation rule
\begin{equation} \label{fj_normalized_simplified}
\mathbf{H}^{(\ell+1)} = \bm\Lambda \bm{\mathcal{A}} \mathbf{H}^{(\ell)}  + (\mathbf{I} - \bm\Lambda) \mathbf{H}^{(0)},
\end{equation}
This simplification is consistent with prior work~\cite{wu2019simplifying}, which argues that most of the benefit in GCNs comes from local averaging rather than from nonlinear activation functions.

\begin{theorem}\label{rank_preservation_thorem}
Considering the simplified version of the message passing~\eqref{GRC} given by~\eqref{fj_normalized_simplified}, where we have no activation function, no linear transformation, the system stabilizes and it maintains full rank embeddings for all $\ell \in \mathbb{N}$. More precisely, the limiting behavior is governed by
\[
\lim_{\ell \to \infty} \mathbf{H}^{(\ell)} = (\mathbf{I} - \bm\Lambda \bm{\mathcal{A}})^{-1}(\mathbf{I} - \bm \Lambda)\mathbf{H}^{(0)},
\]
and the embedding space never collapses, as
\[
\mathrm{rank}\big(\mathbf{H}^{(\ell)}\big) = \mathrm{rank}\big(\mathbf{H}^{(0)}\big).
\]
\end{theorem}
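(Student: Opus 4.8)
Here is how I would approach proving Theorem~\ref{rank_preservation_thorem}.

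\emph{Closed form and limit.} The first step is to turn the recursion~\eqref{fj_normalized_simplified} into a closed form. By induction on $\ell$,
\[
\mathbf{H}^{(\ell)}=(\bm{\Lambda}\bm{\mathcal{A}})^{\ell}\mathbf{H}^{(0)}+\Big(\sum_{k=0}^{\ell-1}(\bm{\Lambda}\bm{\mathcal{A}})^{k}\Big)(\mathbf{I}-\bm{\Lambda})\mathbf{H}^{(0)}.
\]
Since $\bm{\mathcal{A}}$ is symmetric with spectrum in $[-1,1]$ we have $\|\bm{\mathcal{A}}\|_2\le 1$, and $\bm{\Lambda}$ is diagonal with entries in $(0,1)$, so $\|\bm{\Lambda}\bm{\mathcal{A}}\|_2\le\|\bm{\Lambda}\|_2\,\|\bm{\mathcal{A}}\|_2\le\lambda_{\max}<1$; in particular the spectral radius of $\bm{\Lambda}\bm{\mathcal{A}}$ is below $1$. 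Hence $(\bm{\Lambda}\bm{\mathcal{A}})^{\ell}\to\mathbf{0}$ geometrically (so the sequence stabilizes), $\mathbf{I}-\bm{\Lambda}\bm{\mathcal{A}}$ is invertible, and the Neumann series gives $\sum_{k\ge 0}(\bm{\Lambda}\bm{\mathcal{A}})^{k}=(\mathbf{I}-\bm{\Lambda}\bm{\mathcal{A}})^{-1}$. Passing to the limit in the closed form yields $\lim_{\ell\to\infty}\mathbf{H}^{(\ell)}=(\mathbf{I}-\bm{\Lambda}\bm{\mathcal{A}})^{-1}(\mathbf{I}-\bm{\Lambda})\mathbf{H}^{(0)}$.

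\emph{Reduction of the rank claim.} Write $\mathbf{H}^{(\ell)}=\mathbf{S}_{\ell}\mathbf{H}^{(0)}$ with $\mathbf{S}_{\ell}:=(\bm{\Lambda}\bm{\mathcal{A}})^{\ell}+\big(\sum_{k=0}^{\ell-1}(\bm{\Lambda}\bm{\mathcal{A}})^{k}\big)(\mathbf{I}-\bm{\Lambda})$. Since left-multiplication by an invertible matrix preserves rank, it suffices to prove $\mathbf{S}_{\ell}$ is nonsingular for every $\ell$. I would first put $\mathbf{S}_{\ell}$ in a more convenient form: telescoping the consecutive differences $\mathbf{H}^{(k+1)}-\mathbf{H}^{(k)}=\bm{\Lambda}\bm{\mathcal{A}}(\mathbf{H}^{(k)}-\mathbf{H}^{(k-1)})$, together with $\mathbf{H}^{(1)}-\mathbf{H}^{(0)}=-\bm{\Lambda}\bm{\mathcal{L}}\mathbf{H}^{(0)}$, gives $\mathbf{S}_{\ell}=\mathbf{I}-\sum_{k=0}^{\ell-1}(\bm{\Lambda}\bm{\mathcal{A}})^{k}\bm{\Lambda}\bm{\mathcal{L}}$; then using $\sum_{k=0}^{\ell-1}(\bm{\Lambda}\bm{\mathcal{A}})^{k}=(\mathbf{I}-\bm{\Lambda}\bm{\mathcal{A}})^{-1}(\mathbf{I}-(\bm{\Lambda}\bm{\mathcal{A}})^{\ell})$ and the identity $\bm{\Lambda}\bm{\mathcal{L}}=(\mathbf{I}-\bm{\Lambda}\bm{\mathcal{A}})-(\mathbf{I}-\bm{\Lambda})$, one obtains the compact form
\[
\mathbf{S}_{\ell}=(\mathbf{I}-\bm{\Lambda}\bm{\mathcal{A}})^{-1}\big[(\mathbf{I}-\bm{\Lambda})+(\bm{\Lambda}\bm{\mathcal{A}})^{\ell}\bm{\Lambda}\bm{\mathcal{L}}\big].
\]
As $(\mathbf{I}-\bm{\Lambda}\bm{\mathcal{A}})^{-1}$ is invertible, $\mathbf{S}_{\ell}$ is nonsingular iff $\mathbf{T}_{\ell}:=(\mathbf{I}-\bm{\Lambda})+(\bm{\Lambda}\bm{\mathcal{A}})^{\ell}\bm{\Lambda}\bm{\mathcal{L}}$ is; and factoring out the invertible diagonal matrix $\mathbf{I}-\bm{\Lambda}$ (each entry $1-\lambda_i>0$) reduces this to invertibility of $\mathbf{I}+\mathbf{E}_{\ell}$, where $\mathbf{E}_{\ell}:=(\mathbf{I}-\bm{\Lambda})^{-1}(\bm{\Lambda}\bm{\mathcal{A}})^{\ell}\bm{\Lambda}\bm{\mathcal{L}}$. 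Collecting $\|(\mathbf{I}-\bm{\Lambda})^{-1}\|_2=(1-\lambda_{\max})^{-1}$, $\|(\bm{\Lambda}\bm{\mathcal{A}})^{\ell}\|_2\le\lambda_{\max}^{\ell}$, $\|\bm{\Lambda}\|_2=\lambda_{\max}$ and $\|\bm{\mathcal{L}}\|_2\le 2$ yields $\|\mathbf{E}_{\ell}\|_2\le 2\lambda_{\max}^{\ell+1}/(1-\lambda_{\max})$; whenever this is $<1$, $\mathbf{I}+\mathbf{E}_{\ell}$ is invertible by the Neumann series, hence $\mathbf{S}_{\ell}$ is nonsingular and $\mathrm{rank}(\mathbf{H}^{(\ell)})=\mathrm{rank}(\mathbf{H}^{(0)})$. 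The case $\ell=0$ is immediate since $\mathbf{S}_{0}=\mathbf{I}$.

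\emph{Main obstacle.} The genuinely delicate part is establishing that $\mathbf{S}_{\ell}$ is nonsingular for \emph{every} $\ell\in\mathbb{N}$: the crude estimate above is tightest at $\ell=1$ and only controls $\|\mathbf{E}_{\ell}\|_2$ when $\lambda_{\max}$ is bounded away from $1$, the worst case being (near-)bipartite graphs, where $\bm{\mathcal{L}}$ carries an eigenvalue near $2$ while $\bm{\Lambda}\bm{\mathcal{A}}$ carries one near $-1$. Pushing the statement through in full generality therefore calls for a sharper spectral argument — e.g.\ bounding $\det\mathbf{S}_{\ell}$ directly, or arguing that the (generically non-symmetric) product $(\bm{\Lambda}\bm{\mathcal{A}})^{\ell}\bm{\Lambda}\bm{\mathcal{L}}$ cannot realise the eigenvalue configuration that would make $\mathbf{T}_{\ell}$ singular — and this is precisely the place where any additional structural hypothesis on $\bm{\Lambda}$ or on $G$ would enter. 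By contrast, the convergence statement and the rank of the limiting matrix $(\mathbf{I}-\bm{\Lambda}\bm{\mathcal{A}})^{-1}(\mathbf{I}-\bm{\Lambda})\mathbf{H}^{(0)}$, a product of invertible matrices applied to $\mathbf{H}^{(0)}$, follow cleanly from the first step.
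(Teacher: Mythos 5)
Your first step---the unrolled closed form, the bound $\rho(\bm{\Lambda}\bm{\mathcal{A}})\le\|\bm{\Lambda}\|_2\,\|\bm{\mathcal{A}}\|_2<1$, the Neumann series, and the resulting limit $(\mathbf{I}-\bm{\Lambda}\bm{\mathcal{A}})^{-1}(\mathbf{I}-\bm{\Lambda})\mathbf{H}^{(0)}$---is exactly the paper's argument and is correct.

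On the rank claim you have in fact gone further than the paper does. The paper's proof handles rank preservation in one line: the limiting matrix is a product of two invertible matrices, so the \emph{limit} has the same rank as $\mathbf{H}^{(0)}$. It never addresses the finite-$\ell$ assertion $\mathrm{rank}(\mathbf{H}^{(\ell)})=\mathrm{rank}(\mathbf{H}^{(0)})$ for all $\ell\in\mathbb{N}$, i.e., it never shows that your $\mathbf{S}_\ell$ is nonsingular. Your reduction to the invertibility of $\mathbf{T}_\ell=(\mathbf{I}-\bm{\Lambda})+(\bm{\Lambda}\bm{\mathcal{A}})^{\ell}\bm{\Lambda}\bm{\mathcal{L}}$ and the estimate $\|\mathbf{E}_\ell\|_2\le 2\lambda_{\max}^{\ell+1}/(1-\lambda_{\max})$ are correct, and the obstacle you flag is genuine rather than an artifact of a loose bound: take $G=K_2$, so that $\bm{\mathcal{A}}=\left(\begin{smallmatrix}0&1\\1&0\end{smallmatrix}\right)$, and $\bm{\Lambda}=\tfrac12\mathbf{I}$; then $\mathbf{S}_1=\bm{\Lambda}\bm{\mathcal{A}}+(\mathbf{I}-\bm{\Lambda})=\tfrac12\left(\begin{smallmatrix}1&1\\1&1\end{smallmatrix}\right)$ is singular, so a rank-$2$ input drops to rank $1$ after a single layer. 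Hence the finite-$\ell$ claim cannot be proved in full generality; it requires an additional hypothesis (for instance $2\lambda_{\max}^{\ell+1}<1-\lambda_{\max}$, or excluding the near-bipartite part of the spectrum of $\bm{\mathcal{A}}$), exactly as you suspected. In short, your proposal is incomplete precisely where you say it is, the paper's own proof is silent at the same point, and the part that both you and the paper do establish unconditionally is the statement about the limiting embedding.
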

\begin{proof}
Unfolding the update rule~\eqref{fj_normalized_simplified}, gives 
\begin{equation}\label{unroll}
\mathbf{H}^{(\ell+1)} = (\bm\Lambda \bm{\mathcal{A}})^{\ell+1}\mathbf{H}^{(0)} + \left(\sum_{i=0}^\ell \big(\bm\Lambda \bm{\mathcal{A}}\big)^i\right)(\mathbf{I} - \bm\Lambda)\mathbf{H}^{(0)}
\end{equation}
We claim that the finite sum of matrix powers in Equation~\eqref{unroll} admits an exact closed-form expression. First, as $\bm{\mathcal{A}}$ is symmetric, we have 
\begin{align}\label{norm_A_2}
\|\bm{\mathcal{A}}\|_2 
&= \max_{\textbf{x} \neq 0} \frac{\|\bm{\mathcal{A}}\textbf{x}\|_2}{\|\textbf{x}\|_2} 
= \max_{\textbf{x} \neq 0} \frac{\left(\textbf{x}^{\top} \bm{\mathcal{A}}^{\top} \bm{\mathcal{A}} \textbf{x}\right)^{1/2}}{\left(\textbf{x}^{\top} \textbf{x}\right)^{1/2}} \nonumber \\
&= \sqrt{\alpha_{\max}(\bm{\mathcal{A}}^{\top} \bm{\mathcal{A}})} 
= \alpha_{\max}(\bm{\mathcal{A}}) = 1
\end{align}
For any matrix $\mathbf{M}$, the spectral radius $\rho(\mathbf{M})$ satisfies $\rho(\mathbf{M}) \leq \|\mathbf{M}\|_2$ (this follows because for any eigenvalue $\lambda$ of $\mathbf{M}$ with corresponding eigenvector $\mathbf{v}$, we have $|\lambda| \|\mathbf{v}\|_2=\|\mathbf{M}\mathbf{v}\|_2 \leq \|\mathbf{M}\|_2 \|\mathbf{v}\|_2$, so $|\lambda| \leq \|\mathbf{M}\|_2$). Therefore, we have
\[
\rho(\bm\Lambda \bm{\mathcal{A}}) \leq \|\bm\Lambda \bm{\mathcal{A}}\|_2 \leq\|\bm\Lambda\|_2\|\bm{\mathcal{A}}\|_2 = \|\bm\Lambda\|_2 \times 1 < 1.
\]
This spectral radius condition guarantees the convergence of the Neumann series~\cite{horn2012matrix}
\[
\sum_{i=0}^{\infty} (\bm\Lambda \bm{\mathcal{A}})^i = (\mathbf{I} - \bm\Lambda \bm{\mathcal{A}})^{-1}.
\]
To derive the finite sum expression, we proceed as follows
\begin{align*}
\sum_{i=0}^\ell (\bm\Lambda \bm{\mathcal{A}})^i 
&= ( \mathbf{I} - \bm\Lambda \bm{\mathcal{A}} )^{-1} - \sum_{i=\ell+1}^\infty (\bm\Lambda \bm{\mathcal{A}})^i \\
&= ( \mathbf{I} - \bm\Lambda \bm{\mathcal{A}} )^{-1} - (\bm\Lambda \bm{\mathcal{A}})^{\ell+1} \sum_{j=0}^\infty (\bm\Lambda \bm{\mathcal{A}})^j \\
&= ( \mathbf{I} - \bm\Lambda \bm{\mathcal{A}} )^{-1} - (\bm\Lambda \bm{\mathcal{A}})^{\ell+1} ( \mathbf{I} - \bm\Lambda \bm{\mathcal{A}} )^{-1} \\
&= \Big(\mathbf{I} - (\bm\Lambda \bm{\mathcal{A}})^{\ell+1}\Big) (\mathbf{I} - \bm\Lambda \bm{\mathcal{A}})^{-1}
\end{align*}
Substituting this result into Equation~\eqref{unroll} yields
\begin{align*}
\mathbf{H}^{(\ell+1)} &= (\bm\Lambda \bm{\mathcal{A}})^{\ell+1} \mathbf{H}^{(0)} \notag \\
&\quad + \big(\mathbf{I} - (\bm\Lambda \bm{\mathcal{A}})^{\ell+1} \big)(\mathbf{I} - \bm\Lambda \bm{\mathcal{A}})^{-1}(\mathbf{I} - \bm\Lambda)\mathbf{H}^{(0)}.
\end{align*}
Taking the limit as $\ell \to \infty$, the spectral radius condition (i.e., $\rho(\bm\Lambda\bm{\mathcal{A}}) <1$) ensures $\lim_{\ell \to \infty} (\bm\Lambda \bm{\mathcal{A}})^\ell = \mathbf{0}$, giving the closed-form 
\[
\lim_{\ell \to \infty} \mathbf{H}^{(\ell)} = (\mathbf{I} - \bm\Lambda \bm{\mathcal{A}})^{-1}(\mathbf{I} - \bm \Lambda)\mathbf{H}^{(0)}.
\]
For the rank preservation property, note that as $\lambda_i < 1$, for $i=1,\cdots,n$, so $\mathbf{I} - \bm\Lambda$ is full-rank, and $\rho(\bm\Lambda \bm{\mathcal{A}}) < 1$ ensures $(\mathbf{I} - \bm\Lambda \bm{\mathcal{A}})^{-1}$ is full-rank. Putting all these together, the final results are obtained as 
for any matrices $\mathbf{A} \in \mathbb{R}^{n \times n}$ (invertible) and 
$\mathbf{B} \in \mathbb{R}^{n \times n}$, we have $\mathrm{rank}(\mathbf{AB}) = \mathrm{rank}(\mathbf{B})$. 
\end{proof}

\subsection{Dirichlet Energy}
Our analysis in this section proceeds as follows: first, we establish Lemma~\ref{fW_upper_bound}, Lemma~\ref{LambdaA_upper_bound}, and Corollary~\ref{upperbound_LambdaAXW}, which then enable the proof of our main result in Theorem~\ref{theorem_dirichlet_energy}.

\begin{definition}[Dirichlet Energy]\label{dirichlet_energy_def}
The Dirichlet energy of a scalar vector $\textbf{f} \in \mathbb{R}^{n \times 1}$ on the graph $G = (V, E, \mathbf{A})$ is given by  
\[\small
\mathcal{E}(\textbf{f}) = \textbf{f}^{\top} \bm{\mathcal{L}} \textbf{f} = \frac{1}{2} \sum_{(i,j) \in E} a_{ij} \left( \frac{f_i}{\sqrt{1 + d_i}} - \frac{f_j}{\sqrt{1 + d_j}} \right)^2.
\]      
\end{definition}
For a feature matrix \( \textbf{X} \in \mathbb{R}^{n \times d} \) with rows \( \textbf{x}_i \in \mathbb{R}^{1 \times d} \), the Dirichlet energy extends naturally as 
\[ \small
\mathcal{E}(\textbf{X}) = \text{tr}(\textbf{X}^{\top} \bm{\mathcal{L}} \textbf{X}) = \frac{1}{2} \sum_{(i,j) \in E} a_{ij} \left\| \frac{\textbf{x}_i}{\sqrt{1 + d_i}} - \frac{\textbf{x}_j}{\sqrt{1 + d_j}} \right\|_2^2.
\] 

This formulation quantifies the smoothness of the features over the graph; higher energy values indicate greater differences in features between adjacent nodes~\cite{rusch2023survey}.

\begin{lemma}\label{fW_upper_bound}
Let \(\textbf{W} \in \mathbb{R}^{d_{\ell} \times d_{\ell+1}}\) be a weight matrix of rank $r$. For any vector \(\textbf{f} \in \mathbb{R}^{1 \times d_{l}}\) in the column space of \(\textbf{W}\), we have
\[
\| \textbf{f} \textbf{W} \|_2 \geq \| \textbf{f}\|_2 \sigma_r(\textbf{W}),
\]
where \(\sigma_r(\textbf{W})\) is the smallest non-zero singular value of \(\textbf{W}\).
\end{lemma}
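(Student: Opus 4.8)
The plan is to reduce the claim to a direct computation with the singular value decomposition of $\textbf{W}$. Write $\textbf{W} = \textbf{U}\bm{\Sigma}\textbf{V}^{\top}$ with left singular vectors $\textbf{u}_1, \dots, \textbf{u}_{d_\ell}$ and right singular vectors $\textbf{v}_1, \dots, \textbf{v}_{d_{\ell+1}}$. From the preliminaries we have $\mathcal{R}(\textbf{W}) = \operatorname{span}\{\textbf{u}_1, \dots, \textbf{u}_r\}$ and $\textbf{W}^{\top}\textbf{u}_i = \sigma_i(\textbf{W})\,\textbf{v}_i$ for $i = 1, \dots, r$. Since the Euclidean norm is invariant under transposition, I would first rewrite $\|\textbf{f}\textbf{W}\|_2 = \|\textbf{W}^{\top}\textbf{f}^{\top}\|_2$, so that the hypothesis that $\textbf{f}$ (viewed as $\textbf{f}^{\top}$) lies in the column space of $\textbf{W}$ can be applied directly.

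Next, because $\textbf{f}^{\top} \in \mathcal{R}(\textbf{W})$, expand it in the orthonormal basis of left singular vectors spanning that space: $\textbf{f}^{\top} = \sum_{i=1}^{r} c_i\,\textbf{u}_i$ for scalars $c_i$, so that $\|\textbf{f}\|_2^2 = \|\textbf{f}^{\top}\|_2^2 = \sum_{i=1}^{r} c_i^2$ by orthonormality of the $\textbf{u}_i$. Applying $\textbf{W}^{\top}$ and using $\textbf{W}^{\top}\textbf{u}_i = \sigma_i(\textbf{W})\,\textbf{v}_i$ yields $\textbf{W}^{\top}\textbf{f}^{\top} = \sum_{i=1}^{r} c_i\,\sigma_i(\textbf{W})\,\textbf{v}_i$. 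Since the right singular vectors $\textbf{v}_1, \dots, \textbf{v}_r$ are also orthonormal, this gives $\|\textbf{W}^{\top}\textbf{f}^{\top}\|_2^2 = \sum_{i=1}^{r} c_i^2\,\sigma_i(\textbf{W})^2$.

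Finally, since $\sigma_1(\textbf{W}) \geq \cdots \geq \sigma_r(\textbf{W}) > 0$, each term satisfies $\sigma_i(\textbf{W})^2 \geq \sigma_r(\textbf{W})^2$, hence $\sum_{i=1}^{r} c_i^2\,\sigma_i(\textbf{W})^2 \geq \sigma_r(\textbf{W})^2 \sum_{i=1}^{r} c_i^2 = \sigma_r(\textbf{W})^2\,\|\textbf{f}\|_2^2$. Taking square roots gives $\|\textbf{f}\textbf{W}\|_2 \geq \sigma_r(\textbf{W})\,\|\textbf{f}\|_2$, as claimed.

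There is no substantial obstacle here; it is a short SVD argument. The only points requiring care are the bookkeeping between the row-vector convention for $\textbf{f}$ and the column-space hypothesis (resolved by the transposition step $\|\textbf{f}\textbf{W}\|_2 = \|\textbf{W}^{\top}\textbf{f}^{\top}\|_2$), and restricting the expansion to the first $r$ singular directions so that $\sigma_r(\textbf{W}) > 0$ and the lower bound is non-vacuous — which is exactly why the hypothesis $\textbf{f}^{\top} \in \mathcal{R}(\textbf{W})$ is needed.
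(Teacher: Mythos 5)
Your proof is correct and follows essentially the same route as the paper's: expand $\textbf{f}$ in the first $r$ left singular vectors using the column-space hypothesis, apply the SVD, and bound each nonzero singular value below by $\sigma_r(\textbf{W})$. The only difference is cosmetic bookkeeping — you transpose to work with $\textbf{W}^{\top}\textbf{f}^{\top}$ and the relations $\textbf{W}^{\top}\textbf{u}_i = \sigma_i\textbf{v}_i$, while the paper keeps $\textbf{f}$ as a row vector and computes $(\textbf{f}\textbf{U})\bm{\Sigma}\textbf{V}^{\top}$ directly.
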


\begin{proof}
By the singular value decomposition~\cite{horn2012matrix}, $\textbf{W}$ can be decomposed as $\textbf{W} = \textbf{U} \bm\Sigma \textbf{V}^{\top}$, where $\textbf{U} \in \mathbb{R}^{d_{l} \times d_{l}}$ and 
$\textbf{V} \in \mathbb{R}^{d_{l+1} \times d_{l+1}}$ are both orthogonal matrices. The matrix $\bm\Sigma \in \mathbb{R}^{d_{l} \times d_{l+1}}$ is diagonal, with singular values $\sigma_1 \geq \sigma_2 \geq \dots \geq \sigma_{r} > \sigma_{r+1}=0\cdots =\sigma_{\min\{d_{\ell},d_{\ell+1}\}}$, where $r = \text{rank}(\textbf{W})$; the remaining entries of $\bm\Sigma$ are zero. For any vector $\textbf{f} \in \mathbb{R}^{1 \times d_l}$ in the column space of $\textbf{W}$, we can write
\[
\textbf{f}\textbf{W} = \textbf{f} (\textbf{U} \bm\Sigma \textbf{V}^{\top}) = (\textbf{f}\textbf{U}) \bm\Sigma \textbf{V}^{\top}.
\]
Let $\textbf{y} := \textbf{f}\textbf{U}$. As $\textbf{f}$ is in the column space of $\textbf{W}$, it can be written as $\textbf{f}=\sum_{j=1}^r \alpha_j \textbf{u}_j^{\top}$ where $\alpha_j$'s are some real coefficients. As the columns of $\textbf{U}$ are orthogonal, $\textbf{y}=(\alpha_1,\cdots,\alpha_r,0,\cdots,0)$ and note that $\| \textbf{y} \|_2=\left(\sum_{i=1}^r \alpha_i^2\right)^{1/2}$. 

Hence, since $\textbf{f}\textbf{W} = \textbf{y} \bm\Sigma \textbf{V}^{\top}$, we have $\|\textbf{f}\textbf{W}\|_2 = \|\textbf{y} \bm\Sigma \textbf{V}^{\top}\|_2=\|\textbf{y} \bm\Sigma\|_2$ (note as $\textbf{V}^{\top}$ is orthogonal, it preserves the norm). Since $\textbf{y} \bm\Sigma = [\alpha_1 \sigma_1, \alpha_2 \sigma_2, \dots, \alpha_{r} \sigma_{r}, 0, \dots, 0]$, we have 
\begin{align*}\label{lower_bound_fW_in_proof}
\|\textbf{f}\textbf{W}\|_2 = \|\textbf{y} \bm\Sigma\|_2 
&= \sqrt{\sum_{i=1}^{r} (\alpha_i \sigma_i)^2} 
\geq \sigma_{r} (\textbf{W}) \| \textbf{y}\|_2.
\end{align*}
Since orthogonal transformations preserve the Euclidean norm $\|\textbf{y}\|_2 = \|\textbf{f}\|_2$, which finishes the proof.
\end{proof}

\begin{remark}
The assumption that $\textbf{f}$ lies in the column space of $\textbf{W}$ is necessary for the inequality to hold, as otherwise, the inequality does not hold. For example, consider $\mathbf{W} = \begin{bmatrix} 1 & 0 \\ 0 & 0 \end{bmatrix}$ (with rank $r=1$ and $\sigma_r=1$) and the vector $\mathbf{f} = [0, 1]$. Here, $\mathbf{f}$ lies outside the column space of $\mathbf{W}$ (which is spanned solely by $[1, 0]$), resulting in $\mathbf{f}\mathbf{W} = [0, 0]$. Consequently, we obtain $\|\mathbf{f}\mathbf{W}\|_2 = 0$ while $\sigma_r \|\mathbf{f}\|_2 = 1$, violating the inequality. 
\end{remark}

\begin{lemma}\label{LambdaA_upper_bound}
Let $\bm\Lambda$ be a diagonal residual strength matrix, and let $\bm{\mathcal{A}}$ be a normalized adjacency matrix of rank $r$. Then, for any vector $\textbf{f} \in \mathbb{R}^{n \times 1}$ in the row space of $\bm{\mathcal{A}}$, we have 
\[
\big\| \bm\Lambda \bm{\mathcal{A}} \textbf{f} \big\|_2 \geq \lambda_{\min} \, \sigma_r(\bm{\mathcal{A}}) \, \|\textbf{f}\|_2, 
\]
where $\lambda_{\min}$ denotes the smallest diagonal entry of $\bm\Lambda$, and $\sigma_r(\bm{\mathcal{A}})$ is the smallest non-zero singular value of $\bm{\mathcal{A}}$.
\end{lemma}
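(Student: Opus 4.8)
The plan is to split the desired lower bound into two independent contractions and chain them: the effect of $\bm{\mathcal{A}}$ on a vector lying in its row space, followed by the effect of the diagonal matrix $\bm\Lambda$. Since both steps are just operator-wise bounds on the Euclidean norm, no commutativity between $\bm\Lambda$ and $\bm{\mathcal{A}}$ is needed.

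First I would treat $\bm{\mathcal{A}}$, mimicking the argument of Lemma~\ref{fW_upper_bound} but for the symmetric matrix acting on a column vector. Write the SVD $\bm{\mathcal{A}} = \textbf{U}\bm\Sigma\textbf{V}^{\top}$ with $\sigma_1 \geq \cdots \geq \sigma_r > 0$ and $r = \mathrm{rank}(\bm{\mathcal{A}})$. Since $\textbf{f}$ lies in the row space $\mathcal{R}(\bm{\mathcal{A}}^{\top}) = \operatorname{span}\{\textbf{v}_1,\dots,\textbf{v}_r\}$, expand $\textbf{f} = \sum_{j=1}^r \beta_j \textbf{v}_j$. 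Using $\bm{\mathcal{A}}\textbf{v}_j = \sigma_j \textbf{u}_j$ and the orthonormality of $\textbf{u}_1,\dots,\textbf{u}_r$, we get
\[
\|\bm{\mathcal{A}}\textbf{f}\|_2^2 = \sum_{j=1}^r \sigma_j^2 \beta_j^2 \;\geq\; \sigma_r(\bm{\mathcal{A}})^2 \sum_{j=1}^r \beta_j^2 \;=\; \sigma_r(\bm{\mathcal{A}})^2 \|\textbf{f}\|_2^2,
\]
so $\|\bm{\mathcal{A}}\textbf{f}\|_2 \geq \sigma_r(\bm{\mathcal{A}})\,\|\textbf{f}\|_2$.

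Second, for the diagonal part, for any $\textbf{g} \in \mathbb{R}^{n\times 1}$ we have $\|\bm\Lambda\textbf{g}\|_2^2 = \sum_{i=1}^n \lambda_i^2 g_i^2 \geq \lambda_{\min}^2 \|\textbf{g}\|_2^2$, hence $\|\bm\Lambda\textbf{g}\|_2 \geq \lambda_{\min}\|\textbf{g}\|_2$. Applying this with $\textbf{g} = \bm{\mathcal{A}}\textbf{f}$ and substituting the bound from the first step gives
\[
\|\bm\Lambda\bm{\mathcal{A}}\textbf{f}\|_2 \;\geq\; \lambda_{\min}\,\|\bm{\mathcal{A}}\textbf{f}\|_2 \;\geq\; \lambda_{\min}\,\sigma_r(\bm{\mathcal{A}})\,\|\textbf{f}\|_2,
\]
which is exactly the claim.

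I do not expect a genuine obstacle here; the only point deserving care is the hypothesis $\textbf{f} \in \mathcal{R}(\bm{\mathcal{A}}^{\top})$. Without it, $\textbf{f}$ could carry a nonzero component in $\mathcal{N}(\bm{\mathcal{A}})$, which $\bm{\mathcal{A}}$ annihilates, and the inequality would fail in the same way illustrated by the Remark following Lemma~\ref{fW_upper_bound}. It is also worth noting explicitly that $\sigma_r(\bm{\mathcal{A}}) > 0$ by the definition of $r = \mathrm{rank}(\bm{\mathcal{A}})$ and $\lambda_{\min} > 0$ since each $\lambda_i \in (0,1)$, so the bound is non-degenerate.
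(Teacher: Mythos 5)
Your proof is correct and follows essentially the same route as the paper: both arguments use the SVD of $\bm{\mathcal{A}}$, expand $\textbf{f}$ in the right singular vectors $\textbf{v}_1,\dots,\textbf{v}_r$ (which is where the row-space hypothesis enters), and combine the resulting bound $\|\bm{\mathcal{A}}\textbf{f}\|_2 \geq \sigma_r(\bm{\mathcal{A}})\|\textbf{f}\|_2$ with the elementary diagonal bound $\|\bm\Lambda\textbf{g}\|_2 \geq \lambda_{\min}\|\textbf{g}\|_2$. The only difference is the order in which the two factors are peeled off, which is immaterial.
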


\begin{proof}
Consider the singular value decomposition of $\bm{\mathcal{A}}$ as $\bm{\mathcal{A}} = \textbf{U} \bm{\Sigma} \textbf{V}^{\top}$, where $\textbf{U}, \textbf{V} \in \mathbb{R}^{n \times n}$ are orthogonal matrices, and $\bm{\Sigma}$ is a diagonal matrix with singular values satisfying $\sigma_1 \geq \cdots \geq \sigma_r > \sigma_{r+1} = \cdots = \sigma_n = 0$, where $r = \mathrm{rank}(\bm{\mathcal{A}})$. 

Since $\textbf{f}$ lies in the row space of $\bm{\mathcal{A}}$, it can be expressed as $\textbf{f} = \sum_{j=1}^r \beta_j \textbf{v}_j$, where $\beta_j \in \mathbb{R}$ and $\textbf{v}_j$ are the right singular vectors of $\bm{\mathcal{A}}$. Letting $\textbf{y} := \textbf{V}^{\top} \textbf{f}$ and using the orthogonality of $\textbf{V}$, we obtain $\textbf{y} = (\beta_1, \dots, \beta_r, 0, \dots, 0)^\top$ and $\|\textbf{y}\|_2=(\sum_{j=1}^r \beta_j^2)^{1/2}$. Now observe that
\[
\| \bm\Lambda \bm{\mathcal{A}} \textbf{f} \|_2 
= \| \bm\Lambda \textbf{U} \bm{\Sigma} \textbf{V}^\top \textbf{f} \|_2 
= \| \bm\Lambda \textbf{U} \bm{\Sigma} \textbf{y} \|_2 
\geq \lambda_{\min} \| \textbf{U} \bm{\Sigma} \textbf{y} \|_2,
\]
where the inequality uses the fact that $\bm\Lambda$ is diagonal with minimum entry $\lambda_{\min}$. Since $\textbf{U}$ is orthogonal, it preserves the Euclidean norm, and thus
\[
\| \bm\Lambda \bm{\mathcal{A}} \textbf{f} \|_2 
\geq \lambda_{\min} \| \bm{\Sigma} \textbf{y} \|_2.
\]
Now note that $\bm{\Sigma} \textbf{y} = (\beta_1 \sigma_1, \beta_2 \sigma_2, \dots, \beta_r \sigma_r, 0, \dots, 0)^\top$ and hence
\[
\| \bm{\Sigma} \textbf{y} \|_2 =  \sqrt{\sum_{j=1}^r (\beta_j \sigma_j)^2} \geq  \sigma_r(\bm{\mathcal{A}}) \| \textbf{y} \|_2.
\]
Finally, since orthogonal transformations preserve the Euclidean norm, we have $\| \textbf{y} \|_2 = \| \textbf{f} \|_2$. This completes the proof.
\end{proof}

Combining Lemmata~\ref{fW_upper_bound} and~\ref{LambdaA_upper_bound} gives the following corollary.

\begin{corollary} \label{upperbound_LambdaAXW}
Let $\bm{\Lambda}$ be the (diagonal) residual strength matrix with minimal entry $\lambda_{\min} > 0$, $\bm{\mathcal{A}} \in \mathbb{R}^{n \times n}$ the normalized adjacency matrix, and $\mathbf{W} \in \mathbb{R}^{d_{\ell} \times d_{\ell+1}}$ a weight matrix. For any $\mathbf{X} \in \mathbb{R}^{n \times d_{\ell}}$ whose rows lie in $\operatorname{col}(\mathbf{W})$ and whose columns lie in $\operatorname{row}(\bm{\mathcal{A}})$, the Dirichlet energy satisfies
\[
\mathcal{E}(\bm{\Lambda} \bm{\mathcal{A}} \mathbf{X} \mathbf{W}) \geq \lambda_{\min}^2 \sigma_r^2(\bm{\mathcal{A}}) \sigma_r^2(\mathbf{W}) \mathcal{E}(\mathbf{X}),
\]
where $\sigma_{r}(\cdot)$ denotes the smallest non-zero singular value.
\end{corollary}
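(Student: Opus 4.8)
The plan is to peel the three factors $\mathbf{W}$, $\bm{\mathcal{A}}$, and $\bm{\Lambda}$ off one at a time, exploiting two equivalent expressions for the Dirichlet energy. The edge-sum form $\mathcal{E}(\mathbf{Y})=\tfrac12\sum_{(i,j)\in E}a_{ij}\lVert \mathbf{y}_i/\sqrt{1+d_i}-\mathbf{y}_j/\sqrt{1+d_j}\rVert_2^2$ (with $\mathbf{y}_i$ the $i$-th row of $\mathbf{Y}$) is well suited to a right multiplication by $\mathbf{W}$, which acts on the rows; and the trace form $\mathcal{E}(\mathbf{Y})=\operatorname{tr}(\mathbf{Y}^\top\bm{\mathcal{L}}\mathbf{Y})=\sum_c\mathcal{E}(\mathbf{y}^{(c)})$, the sum over the columns $\mathbf{y}^{(c)}$ of $\mathbf{Y}$, is suited to a left multiplication by $\bm{\Lambda}\bm{\mathcal{A}}$, which acts on the columns. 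So I would establish the chain $\mathcal{E}(\bm{\Lambda}\bm{\mathcal{A}}\mathbf{X}\mathbf{W})\ \ge\ \sigma_r^2(\mathbf{W})\,\mathcal{E}(\bm{\Lambda}\bm{\mathcal{A}}\mathbf{X})\ \ge\ \sigma_r^2(\mathbf{W})\,\lambda_{\min}^2\sigma_r^2(\bm{\mathcal{A}})\,\mathcal{E}(\mathbf{X})$.

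First I would prove the left inequality. Writing the edge-sum form of $\mathcal{E}(\bm{\Lambda}\bm{\mathcal{A}}\mathbf{X}\mathbf{W})$, note that row $i$ of $\bm{\Lambda}\bm{\mathcal{A}}\mathbf{X}\mathbf{W}$ equals $\big((\bm{\Lambda}\bm{\mathcal{A}}\mathbf{X})_i\big)\mathbf{W}$, so each edge term is $a_{ij}\lVert \mathbf{r}_{ij}\mathbf{W}\rVert_2^2$ with $\mathbf{r}_{ij}:=(\bm{\Lambda}\bm{\mathcal{A}}\mathbf{X})_i/\sqrt{1+d_i}-(\bm{\Lambda}\bm{\mathcal{A}}\mathbf{X})_j/\sqrt{1+d_j}\in\mathbb{R}^{1\times d_\ell}$. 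Each row of $\bm{\Lambda}\bm{\mathcal{A}}\mathbf{X}$ is a scaled linear combination of the rows of $\mathbf{X}$, which lie in $\operatorname{col}(\mathbf{W})$ by hypothesis; hence $\mathbf{r}_{ij}\in\operatorname{col}(\mathbf{W})$ and Lemma~\ref{fW_upper_bound} applies to give $\lVert\mathbf{r}_{ij}\mathbf{W}\rVert_2^2\ge\sigma_r^2(\mathbf{W})\lVert\mathbf{r}_{ij}\rVert_2^2$. Summing over edges, what survives is exactly $\sigma_r^2(\mathbf{W})\,\mathcal{E}(\bm{\Lambda}\bm{\mathcal{A}}\mathbf{X})$.

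Next I would attack the right inequality using the column form: $\mathcal{E}(\bm{\Lambda}\bm{\mathcal{A}}\mathbf{X})=\sum_c\mathcal{E}(\bm{\Lambda}\bm{\mathcal{A}}\mathbf{x}^{(c)})$, where each column $\mathbf{x}^{(c)}$ of $\mathbf{X}$ lies in $\operatorname{row}(\bm{\mathcal{A}})$ by hypothesis. So it suffices to prove the vector-valued bound $\mathcal{E}(\bm{\Lambda}\bm{\mathcal{A}}\mathbf{f})\ge\lambda_{\min}^2\sigma_r^2(\bm{\mathcal{A}})\,\mathcal{E}(\mathbf{f})$ for each $\mathbf{f}\in\operatorname{row}(\bm{\mathcal{A}})$. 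Here I would use that $\bm{\mathcal{L}}=\mathbf{I}-\bm{\mathcal{A}}$ is PSD and shares an eigenbasis with $\bm{\mathcal{A}}$, so the symmetric square root $\bm{\mathcal{L}}^{1/2}$ commutes with $\bm{\mathcal{A}}$ and preserves $\operatorname{row}(\bm{\mathcal{A}})$; writing $\mathcal{E}(\mathbf{v})=\lVert\bm{\mathcal{L}}^{1/2}\mathbf{v}\rVert_2^2$, the aim is to rewrite $\bm{\mathcal{L}}^{1/2}\bm{\Lambda}\bm{\mathcal{A}}\mathbf{f}$ as $\bm{\Lambda}\bm{\mathcal{A}}(\bm{\mathcal{L}}^{1/2}\mathbf{f})$ with $\bm{\mathcal{L}}^{1/2}\mathbf{f}\in\operatorname{row}(\bm{\mathcal{A}})$ and then invoke Lemma~\ref{LambdaA_upper_bound}, which would deliver $\lambda_{\min}^2\sigma_r^2(\bm{\mathcal{A}})\lVert\bm{\mathcal{L}}^{1/2}\mathbf{f}\rVert_2^2=\lambda_{\min}^2\sigma_r^2(\bm{\mathcal{A}})\,\mathcal{E}(\mathbf{f})$.

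The hard part will be exactly that last rewriting. Although $\bm{\mathcal{L}}^{1/2}$ commutes with $\bm{\mathcal{A}}$, it does not commute with the diagonal matrix $\bm{\Lambda}$, so one cannot freely move $\bm{\mathcal{L}}^{1/2}$ through the product $\bm{\mathcal{L}}^{1/2}\bm{\Lambda}\bm{\mathcal{A}}\mathbf{f}$; the same obstruction reappears if one tries the edge-sum form directly, since $(\lambda_i a-\lambda_j b)^2$ need not be bounded below by $\lambda_{\min}^2(a-b)^2$. I therefore expect the crux of the corollary to be a careful analysis of how the coordinate rescaling $\bm{\Lambda}$ interacts with the graph spectrum — expanding $\bm{\mathcal{A}}\mathbf{f}$ in the joint eigenbasis of $\bm{\mathcal{A}}$ and $\bm{\mathcal{L}}$ and tracking how $\bm{\Lambda}$ redistributes energy across frequency components — rather than a one-line commutation argument, and this is the step I would spend most of my effort on.
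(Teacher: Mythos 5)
Your overall strategy---peeling off the three factors one at a time, using the edge-sum form of $\mathcal{E}$ for the right-multiplication by $\mathbf{W}$ and the trace/column form for the left-multiplication---is essentially the paper's strategy (the paper peels off $\bm{\Lambda}$ first and then invokes Lemmas~\ref{fW_upper_bound} and~\ref{LambdaA_upper_bound} for $\bm{\mathcal{A}}$ and $\mathbf{W}$, whereas you peel off $\mathbf{W}$ first, but this is cosmetic). Your treatment of the $\mathbf{W}$ step is correct and complete, and your observation that the $\bm{\mathcal{A}}$ step needs $\mathcal{E}(\mathbf{v})=\lVert\bm{\mathcal{L}}^{1/2}\mathbf{v}\rVert_2^2$ together with the fact that $\bm{\mathcal{L}}^{1/2}$ commutes with $\bm{\mathcal{A}}$ and preserves $\operatorname{row}(\bm{\mathcal{A}})$ is exactly the detail needed to convert the Euclidean-norm bound of Lemma~\ref{LambdaA_upper_bound} into a Dirichlet-energy bound; the paper omits this detail entirely.

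The genuine gap is that you never complete the $\bm{\Lambda}$ step: you correctly reduce everything to showing $\mathcal{E}(\bm{\Lambda}\mathbf{Z})\ge\lambda_{\min}^2\,\mathcal{E}(\mathbf{Z})$ (or its combined form with $\bm{\mathcal{A}}$), announce that this is the crux, and stop. As a proof, that is incomplete. You should know, however, that the obstruction you identify is real and is not resolved in the paper either: the paper's entire treatment of this step is the one-line inference ``$\lVert\bm{\Lambda}\mathbf{y}\rVert_2^2\ge\lambda_{\min}^2\lVert\mathbf{y}\rVert_2^2$, hence $\mathcal{E}(\bm{\Lambda}\bm{\mathcal{A}}\mathbf{X}\mathbf{W})\ge\lambda_{\min}^2\mathcal{E}(\bm{\mathcal{A}}\bm{X}\mathbf{W})$,'' which is precisely the non sequitur you flag: $\mathcal{E}(\bm{\Lambda}\mathbf{Z})=\lVert\bm{\mathcal{L}}^{1/2}\bm{\Lambda}\mathbf{Z}\rVert_F^2$ and $\bm{\mathcal{L}}^{1/2}$ does not commute with a non-scalar diagonal $\bm{\Lambda}$. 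Indeed, on a two-node graph with one edge, $\bm{\Lambda}=\operatorname{diag}(0.9,0.1)$ and $\mathbf{z}=(1,9)^{\top}$ give $\bm{\Lambda}\mathbf{z}=(0.9,0.9)^{\top}$, which lies in the kernel of $\bm{\mathcal{L}}$, so $\mathcal{E}(\bm{\Lambda}\mathbf{z})=0<\lambda_{\min}^2\mathcal{E}(\mathbf{z})$. So your ``hard part'' is not a routine computation you skipped; it is a step that fails as a pointwise inequality and appears to require an additional hypothesis (e.g., $\bm{\Lambda}=\beta\mathbf{I}$, or an alignment condition in the spirit of Property~\ref{trace_positive}) that neither your proposal nor the paper's proof supplies.
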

\begin{proof}
As $\bm{\Lambda}$ is a diagonal matrix, we have $\| \bm{\Lambda} \textbf{y}\|_2^2 \geq \lambda_{\min}^2 \|\textbf{y}\|_2^2$, for any vector $\textbf{y} \in \mathbb{R}^n$. Hence, 
\[
\mathcal{E}(\bm{\Lambda}\bm{\mathcal{A}}\mathbf{X}\mathbf{W}) \geq \lambda_{\min}^2 \mathcal{E}(\bm{\mathcal{A}}\mathbf{X}\mathbf{W}).
\]
Now, applying Lemma~\ref{fW_upper_bound} and Lemma~\ref{LambdaA_upper_bound} yields
\[
\mathcal{E}(\bm{\mathcal{A}}\mathbf{X}\mathbf{W}) \geq \sigma_r^2(\bm{\mathcal{A}})\sigma_r^2(\mathbf{W}) \mathcal{E}(\mathbf{X}),
\]
which completes the proof. 
\end{proof}

Now, putting all of these together in conjunction with the following properties, we will prove Theorem~\ref{theorem_dirichlet_energy}. 

\begin{property}\label{activation_lower_bound}
The activation function \( \sigma: \mathbb{R} \rightarrow \mathbb{R} \) is such that for any vector \( \bm{f} \in \mathbb{R}^{n \times 1} \), the Dirichlet energy satisfies $\mathcal{E} \big(\sigma(\bm{f})\big) \geq \alpha^2 \mathcal{E}\big(\bm{f}\big)$ for some positive constant \( \alpha > 0 \).
\end{property}

This property ensures the Dirichlet energy is not reduced by more than a constant factor, preserving signal variation after activation. For example, the leaky ReLU function with negative slope $0<\alpha<1$ is defined as 
\[
\text{Leaky ReLU}(x) = 
\begin{cases} 
x & \text{if } x \geq 0, \\
\alpha x & \text{if } x < 0,
\end{cases}
\]
satisfies Property~\ref{activation_lower_bound}, as shown in the following lemma. 

\begin{lemma}\label{upperbound_sigma}
Let \( \sigma \) be leaky ReLU with negative slope $0<\alpha<1$, then,
\[
\mathcal{E} \big(\sigma(\bm{f})\big) \geq \alpha^2 \mathcal{E}\big(\bm{f}\big). 
\]
for any vector $\textbf{f} \in \mathbb{R}^{n \times 1}$.
\end{lemma}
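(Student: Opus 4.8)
The plan is to exploit two elementary facts about leaky ReLU with slope $\alpha\in(0,1)$: it is \emph{positively homogeneous}, i.e.\ $\sigma(\lambda x)=\lambda\,\sigma(x)$ for every $\lambda>0$, and its increments are bounded below, i.e.\ $\sigma(x)-\sigma(y)\ge \alpha(x-y)$ whenever $x\ge y$. Together these reduce the claimed energy inequality to a term-by-term comparison inside the edge sum that defines $\mathcal{E}$.

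First I would rewrite both energies in a form that isolates the per-node normalization. Writing $g_i:=f_i/\sqrt{1+d_i}$, positive homogeneity of $\sigma$ gives $\sigma(f_i)/\sqrt{1+d_i}=\sigma(g_i)$, so by Definition~\ref{dirichlet_energy_def},
\[
\mathcal{E}\big(\sigma(\bm f)\big)=\tfrac12\sum_{(i,j)\in E}a_{ij}\big(\sigma(g_i)-\sigma(g_j)\big)^2,\qquad \mathcal{E}(\bm f)=\tfrac12\sum_{(i,j)\in E}a_{ij}\,(g_i-g_j)^2 .
\]
Since all edge weights $a_{ij}$ are nonnegative, it suffices to prove the pointwise bound $\big(\sigma(g_i)-\sigma(g_j)\big)^2\ge \alpha^2\,(g_i-g_j)^2$ for every pair $(i,j)$, i.e.\ $|\sigma(g_i)-\sigma(g_j)|\ge \alpha\,|g_i-g_j|$, and then sum over edges.

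Next I would establish that pointwise bound by a short case analysis on the signs of $g_i,g_j$. Assume without loss of generality $g_i\ge g_j$; monotonicity of $\sigma$ gives $\sigma(g_i)\ge\sigma(g_j)$, so it is enough to show $\sigma(g_i)-\sigma(g_j)\ge\alpha(g_i-g_j)$. If $g_i,g_j\ge 0$ the left-hand side equals $g_i-g_j\ge\alpha(g_i-g_j)$ because $\alpha<1$; if $g_i,g_j<0$ it equals $\alpha(g_i-g_j)$; and if $g_i\ge 0>g_j$ it equals $g_i-\alpha g_j\ge \alpha g_i-\alpha g_j=\alpha(g_i-g_j)$, using $g_i\ge0$ and $\alpha\le1$. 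Squaring the resulting nonnegative inequality and summing against the nonnegative coefficients $a_{ij}$ yields $\mathcal{E}(\sigma(\bm f))\ge\alpha^2\mathcal{E}(\bm f)$.

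The only genuine subtlety is the normalization step: because the Dirichlet energy divides each coordinate by a node-dependent constant, one cannot compare $\sigma(\bm f)$ with $\bm f$ edge-by-edge unless one first commutes $\sigma$ past the division, which is precisely where positive homogeneity of leaky ReLU is used — an arbitrary nonlinearity obeying the same one-sided slope bound would not suffice. After that observation the argument is routine one-variable calculus.
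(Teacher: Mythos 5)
Your proposal is correct and follows essentially the same route as the paper: both reduce the claim to an edgewise inequality on the normalized values $f_i/\sqrt{1+d_i}$ and verify it by a case analysis on their signs. Your version is marginally cleaner in that it makes the positive-homogeneity step explicit (the paper uses it tacitly) and handles the mixed-sign case via monotonicity rather than by expanding the square, but the underlying argument is identical.
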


\begin{proof}
We begin by recalling the definition of the energy function:
\begin{equation}
\mathcal{E}(\sigma(\bm{f})) = \frac{1}{2} \sum_{(i,j) \in E} a_{ij} \left( \frac{\sigma(f_i)}{\sqrt{1+d_i}} - \frac{\sigma(f_j)}{\sqrt{1+d_j}} \right)^2.
\end{equation}
To establish the desired inequality, it suffices to prove that for every edge \((i,j) \in \mathcal{E}\),
\begin{equation}\label{claim_activation}
\left( \frac{\sigma(f_i)}{\sqrt{1+d_i}} - \frac{\sigma(f_j)}{\sqrt{1+d_j}} \right)^2 \geq \alpha^2 \left( \frac{f_i}{\sqrt{1+d_i}} - \frac{f_j}{\sqrt{1+d_j}} \right)^2.
\end{equation}
We proceed by examining all possible sign patterns for the normalized inputs \(\frac{f_i}{\sqrt{1+d_i}}\) and \(\frac{f_j}{\sqrt{1+d_j}}\).

\textbf{Case I: Both inputs non-negative.} \\
When \(\frac{f_i}{\sqrt{1+d_i}} \geq 0\) and \(\frac{f_j}{\sqrt{1+d_j}} \geq 0\), the Leaky ReLU reduces to the identity function
\[
\frac{\sigma(f_i)}{\sqrt{1+d_i}} = \frac{f_i}{\sqrt{1+d_i}}, \quad \frac{\sigma(f_j)}{\sqrt{1+d_j}} = \frac{f_j}{\sqrt{1+d_j}}.
\]
Consequently, the inequality~\eqref{claim_activation} holds since \(\alpha^2 \leq 1\).

\textbf{Case II: Both inputs negative.} \\
When \(\frac{f_i}{\sqrt{1+d_i}} < 0\) and \(\frac{f_j}{\sqrt{1+d_j}} < 0\), the Leaky ReLU applies the slope \(\alpha\)
\[
\frac{\sigma(f_i)}{\sqrt{1+d_i}} = \alpha \frac{f_i}{\sqrt{1+d_i}}, \quad \frac{\sigma(f_j)}{\sqrt{1+d_j}} = \alpha \frac{f_j}{\sqrt{1+d_j}}.
\]
In this case, the inequality~\eqref{claim_activation} becomes an equality, as the \(\alpha\) factor appears on both sides.

\textbf{Case III: First input non-negative, second negative.} \\
Suppose \(\frac{f_i}{\sqrt{1+d_i}} \geq 0\) and \(\frac{f_j}{\sqrt{1+d_j}} < 0\), then:
\[
\frac{\sigma(f_i)}{\sqrt{1+d_i}} = \frac{f_i}{\sqrt{1+d_i}}, \quad \frac{\sigma(f_j)}{\sqrt{1+d_j}} = \alpha \frac{f_j}{\sqrt{1+d_j}}.
\]
Let \(a = \frac{f_i}{\sqrt{1+d_i}} \geq 0\) and \(b = \frac{f_j}{\sqrt{1+d_j}} < 0\). The inequality becomes
\[
(a - \alpha b)^2 \geq \alpha^2 (a - b)^2.
\]
Expanding both sides
\[
a^2 - 2\alpha ab + \alpha^2 b^2 \geq \alpha^2 a^2 - 2\alpha^2 ab + \alpha^2 b^2.
\]
or
\[
a^2(1 - \alpha^2) \geq 2\alpha ab(1 - \alpha).
\]
Since \(a \geq 0\), \(b < 0\), and \(\alpha \in (0,1)\), the right-hand side is non-positive while the left-hand side is non-negative, proving the inequality.

\textbf{Case IV: First input negative, second non-negative.} \\
This case is symmetric to Case III and follows by interchanging the roles of \(i\) and \(j\).

Having verified \eqref{claim_activation} for all possible cases, we conclude that \(\mathcal{E}(\sigma(f)) \geq \alpha^2 \mathcal{E}(f)\) as required.
\end{proof}

\begin{property}\label{trace_positive}
For any $\ell$, the we assume $\operatorname{tr}\left(\textbf{X}^{\top} \mathcal{L} \textbf{Y}\right) \geq 0$ where $\textbf{X}=\bm\Lambda \bm{\mathcal{A}}\textbf{H}^{(l)} \textbf{W}^{(\ell)}$ and $\textbf{Y}=(\textbf{I}-\bm\Lambda) \textbf{H}^{(0)} \bm\Theta^{(\ell)}$.
 \end{property}

This property ensures that the difference vectors $\mathbf{x}_i - \mathbf{x}_j$ and $\mathbf{y}_i - \mathbf{y}_j$ are, on average, positively aligned across graph edges $(i,j)$. In other words, the embeddings $\mathbf{X}$ and $\mathbf{Y}$ vary in similar directions among neighboring nodes. A negative trace, by contrast, would suggest that the initial and propagated embeddings vary in opposite directions across edges, potentially contradicting the homophily assumption, which states that connected nodes should have similar embeddings.

\begin{theorem} \label{theorem_dirichlet_energy}
Assume the energy function of the initial embedding is strictly positive, i.e., 
$\mathcal{E}(\bm{H}^{(0)}) > 0$, and the smallest non-zero singular values of the weight matrices are uniformly lower bounded, i.e., for some strictly positive $\epsilon$, 
\begin{align*}
\inf_\ell \left\{ \sigma_r^2(\bm{W}^{(\ell)}) \right\} &= \overline{\sigma}_r (\textbf{W}), \ \ 
\inf_\ell \left\{ \sigma_r^2(\bm{\Theta}^{(\ell)}) \right\} =\overline{\sigma}_r (\bm{\Theta}) > \epsilon.
\end{align*}
where $\sigma_{r}(\cdot)$ denotes the smallest non-zero singular value. Further, suppose $\lambda_{\min} \geq \delta$ and $\lambda_{\max} \leq 1-\delta'$ for some strictly positive $\delta$ and $\delta'$ and the activation function $\sigma(\cdot)$ satisfies Property~\ref{activation_lower_bound} with parameter $\alpha>0$. Then, under the assumption of the Corollary~\ref{upperbound_LambdaAXW} and Property~\ref{trace_positive}, the energy of the final embedding of the message passing~\eqref{GRC} admits the following lower bound
\[
\mathcal{E}(\bm{H}^{(\ell+1)}) \geq \frac{\zeta\overline{\sigma}_r (\bm{\Theta})}{1-\eta\overline{\sigma}_r}\mathcal{E}(\bm{H}^{(0)}) > 0,
\]
where $\eta := \alpha^2\lambda_{\min}^2\sigma_r^2(\bm{\mathcal{A}})$ and $\zeta := \alpha^2(1-\lambda_{\max})^2$.
\end{theorem}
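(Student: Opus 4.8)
The plan is to unfold the recursion in Equation~\eqref{GRC} and bound the Dirichlet energy at layer $\ell+1$ from below in terms of the energy at layer $\ell$, then iterate. First I would write $\bm{H}^{(\ell+1)} = \sigma(\textbf{X} + \textbf{Y})$ with $\textbf{X} = \bm{\Lambda}\bm{\mathcal{A}}\bm{H}^{(\ell)}\bm{W}^{(\ell)}$ and $\textbf{Y} = (\textbf{I}-\bm{\Lambda})\bm{H}^{(0)}\bm{\Theta}^{(\ell)}$. Using Property~\ref{activation_lower_bound} applied column-wise, $\mathcal{E}(\bm{H}^{(\ell+1)}) \geq \alpha^2\,\mathcal{E}(\textbf{X}+\textbf{Y})$. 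Then expand $\mathcal{E}(\textbf{X}+\textbf{Y}) = \operatorname{tr}\big((\textbf{X}+\textbf{Y})^\top\bm{\mathcal{L}}(\textbf{X}+\textbf{Y})\big) = \mathcal{E}(\textbf{X}) + 2\operatorname{tr}(\textbf{X}^\top\bm{\mathcal{L}}\textbf{Y}) + \mathcal{E}(\textbf{Y})$, and invoke Property~\ref{trace_positive} to drop the cross term, giving $\mathcal{E}(\textbf{X}+\textbf{Y}) \geq \mathcal{E}(\textbf{X}) + \mathcal{E}(\textbf{Y})$.

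Next I would lower-bound each of the two surviving terms. For $\mathcal{E}(\textbf{X})$, Corollary~\ref{upperbound_LambdaAXW} gives $\mathcal{E}(\bm{\Lambda}\bm{\mathcal{A}}\bm{H}^{(\ell)}\bm{W}^{(\ell)}) \geq \lambda_{\min}^2\sigma_r^2(\bm{\mathcal{A}})\sigma_r^2(\bm{W}^{(\ell)})\,\mathcal{E}(\bm{H}^{(\ell)}) \geq \lambda_{\min}^2\sigma_r^2(\bm{\mathcal{A}})\overline{\sigma}_r(\textbf{W})\,\mathcal{E}(\bm{H}^{(\ell)})$, i.e.\ at least $(\eta/\alpha^2)\overline{\sigma}_r(\textbf{W})\,\mathcal{E}(\bm{H}^{(\ell)})$. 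For $\mathcal{E}(\textbf{Y})$, the factor $(\textbf{I}-\bm{\Lambda})$ is diagonal with entries at least $1-\lambda_{\max} \geq \delta'$, so $\mathcal{E}((\textbf{I}-\bm{\Lambda})\bm{H}^{(0)}\bm{\Theta}^{(\ell)}) \geq (1-\lambda_{\max})^2\,\mathcal{E}(\bm{H}^{(0)}\bm{\Theta}^{(\ell)})$, and then applying Lemma~\ref{fW_upper_bound} column-wise to $\bm{\Theta}^{(\ell)}$ yields $\mathcal{E}(\bm{H}^{(0)}\bm{\Theta}^{(\ell)}) \geq \sigma_r^2(\bm{\Theta}^{(\ell)})\,\mathcal{E}(\bm{H}^{(0)}) \geq \overline{\sigma}_r(\bm{\Theta})\,\mathcal{E}(\bm{H}^{(0)})$. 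Combining, $\mathcal{E}(\textbf{Y}) \geq (\zeta/\alpha^2)\overline{\sigma}_r(\bm{\Theta})\,\mathcal{E}(\bm{H}^{(0)})$. Putting the pieces together produces the one-step recursion
\[
\mathcal{E}(\bm{H}^{(\ell+1)}) \geq \eta\,\overline{\sigma}_r\,\mathcal{E}(\bm{H}^{(\ell)}) + \zeta\,\overline{\sigma}_r(\bm{\Theta})\,\mathcal{E}(\bm{H}^{(0)}),
\]
where I have written $\overline{\sigma}_r$ for $\overline{\sigma}_r(\textbf{W})$ to match the statement.

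Finally I would solve this linear recurrence. Setting $e_\ell := \mathcal{E}(\bm{H}^{(\ell)})$, $a := \eta\,\overline{\sigma}_r$, and $c := \zeta\,\overline{\sigma}_r(\bm{\Theta})\,\mathcal{E}(\bm{H}^{(0)})$, we have $e_{\ell+1} \geq a\,e_\ell + c$, hence by induction $e_{\ell+1} \geq a^{\ell+1}e_0 + c\sum_{i=0}^{\ell}a^i \geq c\sum_{i=0}^{\ell}a^i$. Here one needs $a = \eta\,\overline{\sigma}_r < 1$ so that the geometric series stays controlled; since $\eta = \alpha^2\lambda_{\min}^2\sigma_r^2(\bm{\mathcal{A}})$ and $\sigma_r(\bm{\mathcal{A}}) \leq \|\bm{\mathcal{A}}\|_2 = 1$ (from Equation~\eqref{norm_A_2}), $\lambda_{\min} < 1$, and a suitably normalized activation constant, $a<1$ holds, and letting the partial sum tend to its limit gives $\mathcal{E}(\bm{H}^{(\ell+1)}) \geq \dfrac{c}{1-a} = \dfrac{\zeta\,\overline{\sigma}_r(\bm{\Theta})}{1-\eta\,\overline{\sigma}_r}\,\mathcal{E}(\bm{H}^{(0)}) > 0$, which is strictly positive by the hypotheses $\mathcal{E}(\bm{H}^{(0)})>0$, $\overline{\sigma}_r(\bm{\Theta})>\epsilon>0$, $\zeta = \alpha^2(1-\lambda_{\max})^2 \geq \alpha^2\delta'^2 > 0$.

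The main obstacle is making the column-space/row-space hypotheses of Corollary~\ref{upperbound_LambdaAXW} and Lemma~\ref{fW_upper_bound} actually hold at every layer: those lemmas require the rows of the argument to lie in $\operatorname{col}$ of the weight matrix and its columns in $\operatorname{row}(\bm{\mathcal{A}})$, which is a nontrivial invariant to propagate through the activation $\sigma$ and the affine mixing with $\bm{H}^{(0)}$. I would need either to argue this invariant is preserved inductively (e.g.\ projecting onto the relevant subspaces, or assuming $\bm{\mathcal{A}}$ full rank and the $\bm{W}^{(\ell)}$ have full column rank so the conditions are vacuous), or to fold it into the standing assumptions as the theorem's phrase ``under the assumption of Corollary~\ref{upperbound_LambdaAXW}'' already signals. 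A secondary subtlety is that Property~\ref{activation_lower_bound} is stated for vectors in $\mathbb{R}^{n\times 1}$, so I must apply it to each column of the embedding matrix separately and sum, which is fine since Dirichlet energy of a matrix is the sum of the energies of its columns; and I should double-check that the constant $\eta\,\overline{\sigma}_r$ is genuinely below $1$ under the stated normalization rather than merely assuming it.
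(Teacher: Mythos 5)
Your proposal follows essentially the same route as the paper's proof: apply Property~\ref{activation_lower_bound} to peel off the activation, use Property~\ref{trace_positive} to drop the cross term in $\mathcal{E}(\mathbf{X}+\mathbf{Y})$, bound the two summands via Corollary~\ref{upperbound_LambdaAXW} and Lemma~\ref{fW_upper_bound} to obtain the one-step recursion $\mathcal{E}(\bm{H}^{(\ell+1)}) \geq \eta\,\overline{\sigma}_r\,\mathcal{E}(\bm{H}^{(\ell)}) + \zeta\,\overline{\sigma}_r(\bm{\Theta})\,\mathcal{E}(\bm{H}^{(0)})$, and then sum the geometric series under $\eta\,\overline{\sigma}_r<1$. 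The caveats you flag (the subspace hypotheses at each layer, and the condition $\eta\,\overline{\sigma}_r<1$) are handled in the paper simply by folding them into the standing assumptions, exactly as you anticipated.
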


\begin{proof}
Beginning with the message passing~\eqref{GRC} 
and using the Property~\ref{activation_lower_bound}, we derive
\begin{align}
\mathcal{E}(\bm{H}^{(\ell+1)}) 
&= \mathcal{E}\Big(\sigma\big(\bm{\Lambda}\bm{\mathcal{A}}\bm{H}^{(\ell)}\bm{W}^{(\ell)}
+ (\bm{I}-\bm{\Lambda})\bm{H}^{(0)}\bm{\Theta}^{(\ell)}\big)\Big) \nonumber \\
&\geq \alpha^2 \mathcal{E}\Big(\bm{\Lambda}\bm{\mathcal{A}}\bm{H}^{(\ell)}\bm{W}^{(\ell)} 
+ (\bm{I}-\bm{\Lambda})\bm{H}^{(0)}\bm{\Theta}^{(\ell)}\Big).
\label{eq:energy-lower-bound}
\end{align}
Now, by Property~\eqref{trace_positive}, we have the following for any embedding matrices $\bm{X}$ and $\bm{Y}$
\begin{align*}
\mathcal{E}(\textbf{X}+\textbf{Y}) 
&= \text{tr}\big((\textbf{X}+\textbf{Y})^{\top} \bm{\mathcal{L}} (\textbf{X}+\textbf{Y})\big) \nonumber \\
&= \text{tr}(\textbf{X}^{\top} \bm{\mathcal{L}} \textbf{X}) + 2\text{tr}(\textbf{X}^{\top} \bm{\mathcal{L}} \textbf{Y}) + \text{tr}(\textbf{Y}^{\top} \bm{\mathcal{L}} \textbf{Y}) \nonumber \\
&\geq \text{tr}(\textbf{X}^{\top} \bm{\mathcal{L}} \textbf{X}) + \text{tr}(\textbf{Y}^{\top} \bm{\mathcal{L}} \textbf{Y}) \nonumber \\
&= \mathcal{E}(\textbf{X}) + \mathcal{E}(\textbf{Y}). 
\end{align*}
Combining this result with Equation~\eqref{eq:energy-lower-bound} gives us 
\begin{align*}
\mathcal{E}(\bm{H}^{(\ell+1)}) 
&\geq \alpha^2 \mathcal{E}\left(\bm{\Lambda}\bm{\mathcal{A}}\bm{H}^{(\ell)}\bm{W}^{(\ell)}\right) \nonumber \\
&\quad+ \alpha^2 \mathcal{E}\left((\bm{I}-\bm{\Lambda})\bm{H}^{(0)}\bm{\Theta}^{(\ell)}\right).
\end{align*}
Exploiting the spectral properties of the matrices involved in Corollary~\ref{upperbound_LambdaAXW}, this further simplifies to
\[
\mathcal{E}(\bm{H}^{(\ell+1)}) \geq \eta\sigma_r^2(\bm{W}^{(\ell)})\mathcal{E}(\bm{H}^{(\ell)}) + \zeta\sigma_r^2(\bm{\Theta}^{(\ell)})\mathcal{E}(\bm{H}^{(0)}).
\]
Iterating this inequality backward through the layers yields
\begin{align*}
\mathcal{E}(\bm{H}^{(\ell+1)}) 
&\geq \eta^{\ell+1}\left(\prod_{i=0}^{\ell}\sigma_r^2(\bm{W}^{(i)})\right)\mathcal{E}(\bm{H}^{(0)})
\end{align*}
\begin{align*}
\ \ \ \ +\zeta \sum_{k=0}^{\ell}\eta^{k}\sigma_r^2(\bm{\Theta}^{(\ell-k)}) \Bigg(\prod_{j=\ell-k+1}^{\ell}\sigma_r^2(\bm{W}^{(j)})\Bigg)\mathcal{E}(\bm{H}^{(0)}).
\end{align*}
As $\sigma_r^2(\bm{W}^{(i)}) \geq \overline{\sigma}_r(\textbf{W})$ and $\sigma_r^2(\bm{\Theta}^{(i)}) \geq \overline{\sigma}_r(\bm\Theta)$, this last inequality becomes
\[\small
\mathcal{E}(\bm{H}^{(\ell+1)}) \geq \left((\eta\overline{\sigma}_r(\textbf{W}))^{\ell+1} + \zeta \overline{\sigma}_r(\bm\Theta) \sum_{k=0}^{\ell}(\eta\overline{\sigma}_r(\textbf{W}))^{k}\right)\mathcal{E}(\bm{H}^{(0)}).
\]
For $\eta\overline{\sigma}_r(\textbf{W}) < 1$ (since otherwise the lower bound diverges as \(\ell \to \infty\), making the result trivial), the geometric series converges, and the first term vanishes as $\ell \to \infty$, giving
\begin{align*}
\mathcal{E}(\bm{H}^{(\ell+1)}) 
&\geq \zeta \overline{\sigma}_r(\bm\Theta) \left(\frac{1 - (\eta\overline{\sigma}_r)^{\ell+1}}{1 - \eta\overline{\sigma}_r}\right)\mathcal{E}(\bm{H}^{(0)}) \nonumber \\
&\xrightarrow{\ell \to \infty} \frac{\zeta \overline{\sigma}_r(\bm\Theta)}{1 - \eta\overline{\sigma}_r}\mathcal{E}(\bm{H}^{(0)}).
\end{align*}
Note that $\zeta$ is strictly positive because $1 - \lambda_{\max} \geq \delta' > 0$. Also, $\overline{\sigma}_r(\bm\Theta)$ is strictly positive by assumption. Additionally, since $\lambda_{\min} \geq \delta$, we have $0 < \eta \overline{\sigma}_r < 1$, which implies $\frac{1}{1 - \eta \overline{\sigma}_r} > 0$, and $\mathcal{E}(\bm{H}^{(0)})$ is positive by assumption.
\end{proof}
Before providing our experimental results, we analyze the complexity of adaptive IRC below. 

\subsection{Time Complexity}
The time complexity of one layer in the adaptive IRC message passing framework is derived from three key computational components: (1) the sparse neighborhood aggregation $\bm{\mathcal{A}}\bm{H}^{(\ell)}$ requiring $O(|E|d_\ell)$ operations; (2) the subsequent dense embedding transformation $\bm{H}^{(\ell)}\bm{W}^{(\ell)}$ with $O(nd_\ell d_{\ell+1})$ complexity; and (3) the residual projection $\bm{H}^{(0)}\bm{\Theta}^{(\ell)}$ costing $O(nd_0 d_{\ell+1})$. The diagonal scaling operations contribute $O(nd_{\ell})$ term, which is asymptotically negligible. Combining these dominant terms yields an overall layer complexity of $O(|E|d_\ell + nd_\ell d_{\ell+1} + nd_0 d_{\ell+1})$. When all embedding dimensions are unified ($d_\ell = d_{\ell+1} = d_0 = d$), this expression simplifies to the more compact form $O(|E|d + nd^2)$, demonstrating that the adaptive IRC maintains computational efficiency of the vanilla GCN while providing the benefits of residual connections.

\section{Experiments}\label{experiments_section}

Our experiments, whose source code is available at \textcolor{blue}{\texttt{https://rb.gy/dlgx42}}, aim to (i) evaluate whether adaptive IRC mitigates over-smoothing, and (ii) assess its performance on node classification compared to established GNNs.

\subsection{Oversmoothing Mitigation}

We first demonstrate that in adaptive IRC, the nodes' embeddings do not collapse, and the energy level of the nodes' embeddings remains non-zero. We use an undirected synthetic graph generated from a stochastic block model, similar to the setup in~\cite{wang2022acmp,wu2022non}. The graph consists of $200$ nodes divided into two equal classes, with two-dimensional features sampled from a normal distribution. Both classes share the same standard deviation $2$ but have different means ($\mu_1 = -0.5$, $\mu_2 = 0.5$). To model homophily, nodes within the same class are connected with a higher probability ($p = 0.2$), while nodes from different classes are connected with a lower probability ($q = 0.05$). As an early observation, Figure~\ref{embedding_evolution_2} shows that GCN embeddings collapse after a few layers due to rank collapsing, while adaptive IRC preserves class separation even after $16$ layers. A similar collapse behavior is observed for GAT and GraphSAGE with mean and max pooling, as shown in Figure~\ref{embedding_evolution_4}. 

\begin{figure}[htbp]
\includegraphics[width=1\columnwidth]{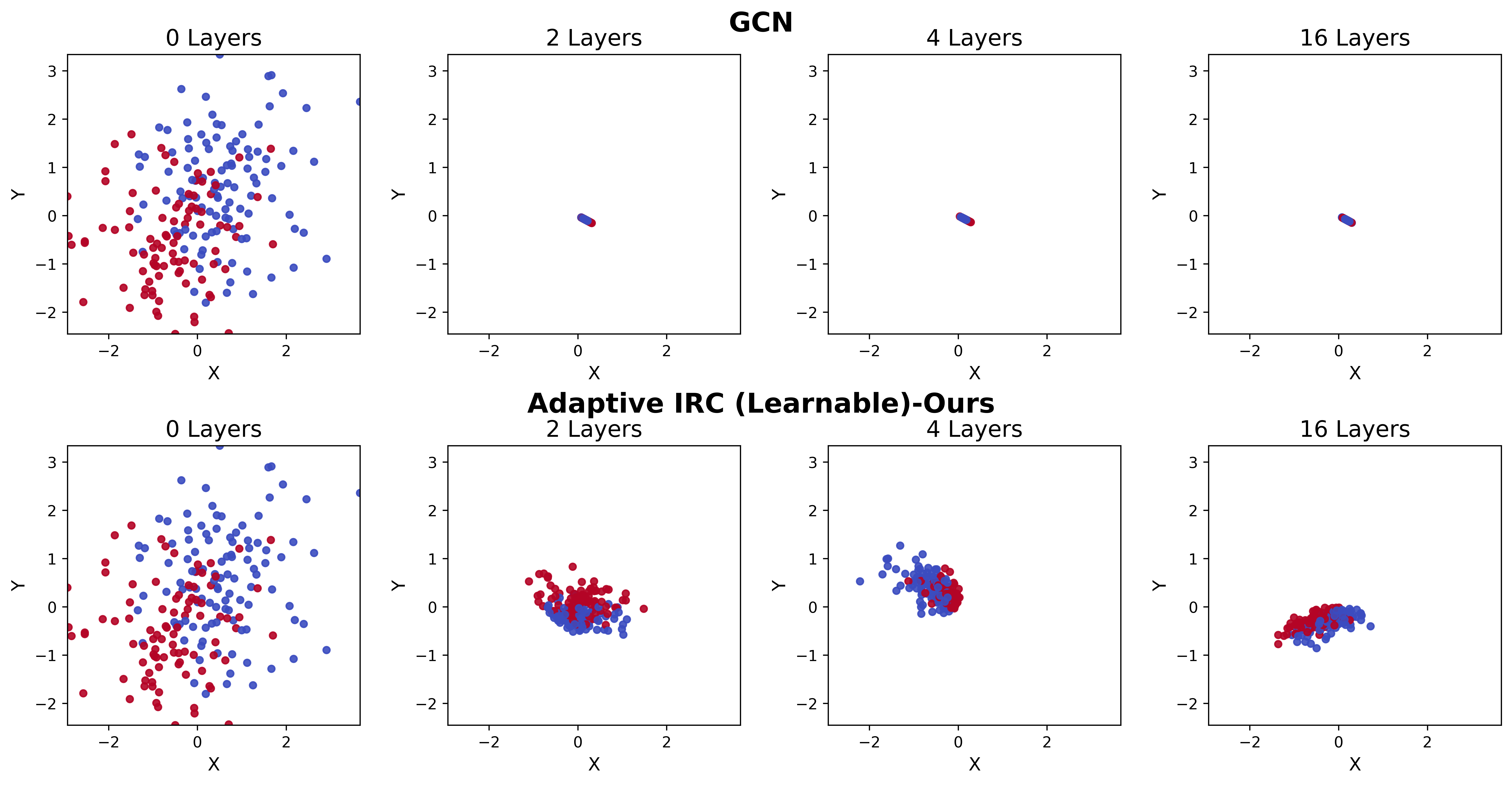}
\caption{Embedding evolution in GCN vs adaptive IRC. GCN leads to embedding collapse, while adaptive IRC preserves distinct clusters.} \label{embedding_evolution_2}
\end{figure}

\begin{figure}[htbp]
\includegraphics[width=1\columnwidth]{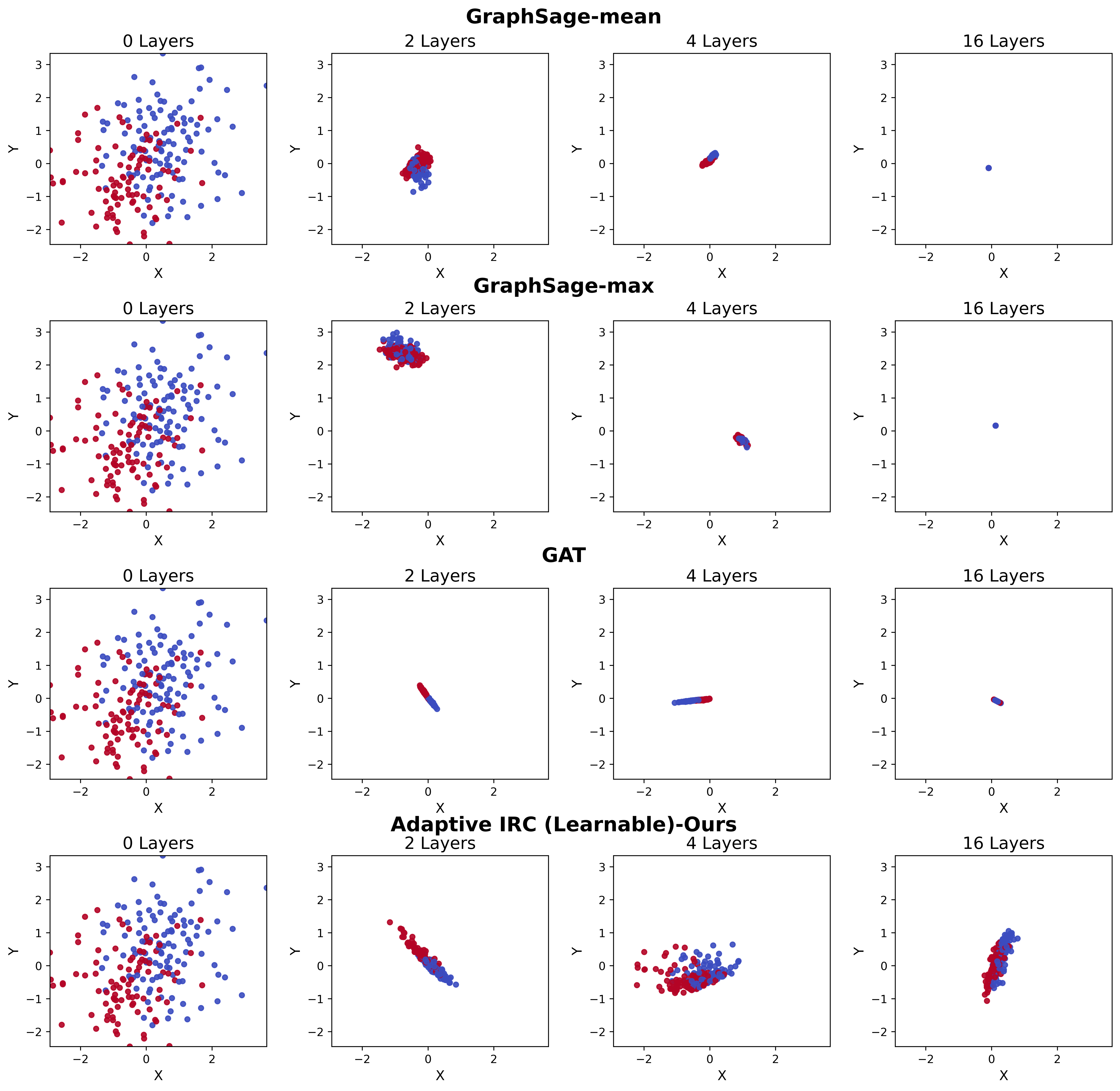}
\caption{Feature evolution in GAT, GraphSAGE (mean and max), and adaptive IRC. GAT and GraphSAGE suffer from Embedding collapse due to oversmoothing, while adaptive IRC preserves distinct clusters.}\label{embedding_evolution_4}
\end{figure}

In Figure~\ref{dirichlet_energy}, we plot the Dirichlet energy for the output of GCN, GAT, GraphSAGE, and adaptive IRC on a logarithmic scale with varying numbers of layers. The figure shows that the energy functions of all GCN, GAT, and GraphSAGE diminish as the number of hidden layers increases. In contrast, the energy level of the adaptive IRC remains notably positive, confirming that it effectively mitigates oversmoothing.

\begin{table*}[ht]\scriptsize
\centering
\caption{Test accuracy and standard deviation over $10$ experiments on each dataset, using different train/validation/test splits. \textcolor{red}{\textbf{Red}} is the best, \textcolor{blue}{\textbf{Blue}}, the second best. The variable $H$ stands for the homophily rate.}
\label{results_Heterophilic}
\begin{tabular}{|l|c|c|c|c|c|c|c|c|c|}
\hline
\rowcolor[HTML]{CFE2F3}
\textbf{Method} & 
\shortstack{\textbf{Cora} \\ \tiny($H$: 0.83)} & 
\shortstack{\textbf{CiteSeer} \\ \tiny($H$: 0.71)} & 
\shortstack{\textbf{Pubmed} \\ \tiny($H$: 0.79)} & 
\shortstack{\textbf{Texas} \\ \tiny($H$: 0.11)} & 
\shortstack{\textbf{Wisconsin} \\ \tiny($H$: 0.21)} & 
\shortstack{\textbf{Cornell} \\ \tiny($H$: 0.30)} & 
\shortstack{\textbf{Chameleon} \\ \tiny($H$: 0.23)} & 
\shortstack{\textbf{Squirrel} \\ \tiny($H$: 0.22)} & 
\shortstack{\textbf{Actor} \\ \tiny($H$: 0.24)} \\ \hline
GCN~\cite{kipf2016semi} & 79.2{\tiny$\pm$0.4} & 64.9{\tiny$\pm$0.8} & 76.7{\tiny$\pm$0.8} & 55.9{\tiny$\pm$6.4} & 47.1{\tiny$\pm$8.5} & 40.3{\tiny$\pm$7.1}  & 33.4{\tiny$\pm$2.2}  & 27.2{\tiny$\pm$0.7}  & 27.3{\tiny$\pm$1.1} \\ \hline
GraphSage-max~\cite{hamilton2017inductive_table} & 75.6{\tiny$\pm$1.0} & 62.3{\tiny$\pm$1.1} & 75.8{\tiny$\pm$0.8} & 72.7{\tiny$\pm$6.6} & 74.3{\tiny$\pm$5.8} & 69.7{\tiny$\pm$3.1} & 50.6{\tiny$\pm$2.2} & 36.8{\tiny$\pm$1.3} & 32.9{\tiny$\pm$1.4} \\ \hline
GraphSage-mean~\cite{hamilton2017inductive_table} & 79.8{\tiny$\pm$0.5} & 66.9{\tiny$\pm$1.0} & 75.8{\tiny$\pm$0.6} & 74.6{\tiny$\pm$6.0} & 76.7{\tiny$\pm$5.3} & 69.5{\tiny$\pm$3.8} & 50.2{\tiny$\pm$1.7} & 36.7{\tiny$\pm$2.3} & 34.1{\tiny$\pm$1.0} \\ \hline
GAT~\cite{velivckovic2017graph} & 75.6{\tiny$\pm$0.4} & 66.2{\tiny$\pm$1.1} & 75.8{\tiny$\pm$1.7} & 57.3{\tiny$\pm$5.9} & 50.4{\tiny$\pm$7.8} & 44.6{\tiny$\pm$5.7} & 39.6{\tiny$\pm$1.8} & 30.7{\tiny$\pm$1.6} & 28.1{\tiny$\pm$1.2} \\ \hline
JKNet~\cite{xu2018representation} & 79.5{\tiny$\pm$0.4} & 66.7{\tiny$\pm$0.7} & 76.7{\tiny$\pm$0.4} & 55.1{\tiny$\pm$7.1} & 51.6{\tiny$\pm$4.6} & 46.8{\tiny$\pm$5.4} & 35.5{\tiny$\pm$1.6} & 28.0{\tiny$\pm$1.0} & 28.4{\tiny$\pm$0.9} \\ \hline
Mixhop~\cite{abu2019mixhop} & 75.0{\tiny$\pm$1.0} & 64.9{\tiny$\pm$0.7} & 74.2{\tiny$\pm$0.6} & 71.1{\tiny$\pm$5.6} & 80.4{\tiny$\pm$7.4} & 65.4{\tiny$\pm$6.1} & 48.4{\tiny$\pm$1.7} & 34.5{\tiny$\pm$2.3} & \textcolor{blue}{\textbf{36.2{\tiny$\pm$1.0}}} \\ \hline
SGC~\cite{wu2019simplifying} & 79.6{\tiny$\pm$0.9} & 65.8{\tiny$\pm$0.8} & \textcolor{blue}{\textbf{77.3{\tiny$\pm$0.2}}} & 57.0{\tiny$\pm$3.5} & 48.8{\tiny$\pm$6.6} & 41.4{\tiny$\pm$6.0} & 35.2{\tiny$\pm$2.6} & 27.7{\tiny$\pm$1.0} & 27.3{\tiny$\pm$1.0} \\ \hline
GCNII~\cite{chen2020simple} & 79.9{\tiny$\pm$0.5} & 67.7{\tiny$\pm$0.5} & 76.5{\tiny$\pm$1.3} & 59.5{\tiny$\pm$5.3} & 60.4{\tiny$\pm$7.4} & 47.0{\tiny$\pm$7.0} & 36.2{\tiny$\pm$2.7} & 28.8{\tiny$\pm$1.0} & 35.2{\tiny$\pm$1.0} \\ \hline
GraphGPS~\cite{rampavsek2022recipe} & 61.6{\tiny$\pm$4.0}  & 43.6{\tiny$\pm$4.7}  & 64.6{\tiny$\pm$9.0} & 58.1{\tiny$\pm$8.8} & 66.9{\tiny$\pm$6.3} & 52.7{\tiny$\pm$13} & 41.4{\tiny$\pm$2.7} & 31.3{\tiny$\pm$1.8} & 29.5{\tiny$\pm$2.4} \\ \hline
DirGNN~\cite{rossi2024edge} & 77.5{\tiny$\pm$1.2} & 66.0{\tiny$\pm$1.7} & 75.0{\tiny$\pm$2.1} & \textcolor{red}{\textbf{84.6{\tiny$\pm$6.1}}} & \textcolor{blue}{\textbf{82.2{\tiny$\pm$2.3}}} & \textcolor{blue}{\textbf{71.6{\tiny$\pm$3.9}}} & 60.6{\tiny$\pm$2.2} & 45.3{\tiny$\pm$1.5} & \textcolor{red}{\textbf{36.6{\tiny$\pm$0.8}}} \\ \hline
\rowcolor[HTML]{E6E6E6}
\scriptsize{Adaptive} IRC (Learnable)-Ours & \textcolor{blue}{\textbf{80.1{\tiny$\pm$1.0}}} & \textcolor{blue}{\textbf{69.3{\tiny$\pm$0.6}}} & 76.6{\tiny$\pm$0.6} & 73.0{\tiny$\pm$5.8} & \textcolor{red}{\textbf{82.4{\tiny$\pm$4.7}}} & 67.8{\tiny$\pm$4.8} & \textcolor{blue}{\textbf{64.1{\tiny$\pm$1.1}}} & \textcolor{blue}{\textbf{47.7{\tiny$\pm$2.2}}} & 34.2{\tiny$\pm$1.3} \\ \hline
\rowcolor[HTML]{E6E6E6}
\scriptsize{Adaptive} IRC (PageRank-based)-Ours & \textcolor{red}{\textbf{80.7{\tiny$\pm$0.4}}} & \textcolor{red}{\textbf{70.2{\tiny$\pm$0.4}}} & \textcolor{red}{\textbf{77.4{\tiny$\pm$0.6}}} & \textcolor{blue}{\textbf{77.0{\tiny$\pm$6.8}}} & 79.0{\tiny$\pm$3.3} & \textcolor{red}{\textbf{72.4{\tiny$\pm$5.7}}} & \textcolor{red}{\textbf{65.0{\tiny$\pm$2.0}}} & \textcolor{red}{\textbf{49.0{\tiny$\pm$2.2}}} & 33.8{\tiny$\pm$1.1} \\ \hline
\end{tabular}
\end{table*}

\begin{figure}[htbp]
    \centering
    \includegraphics[width=1\columnwidth]{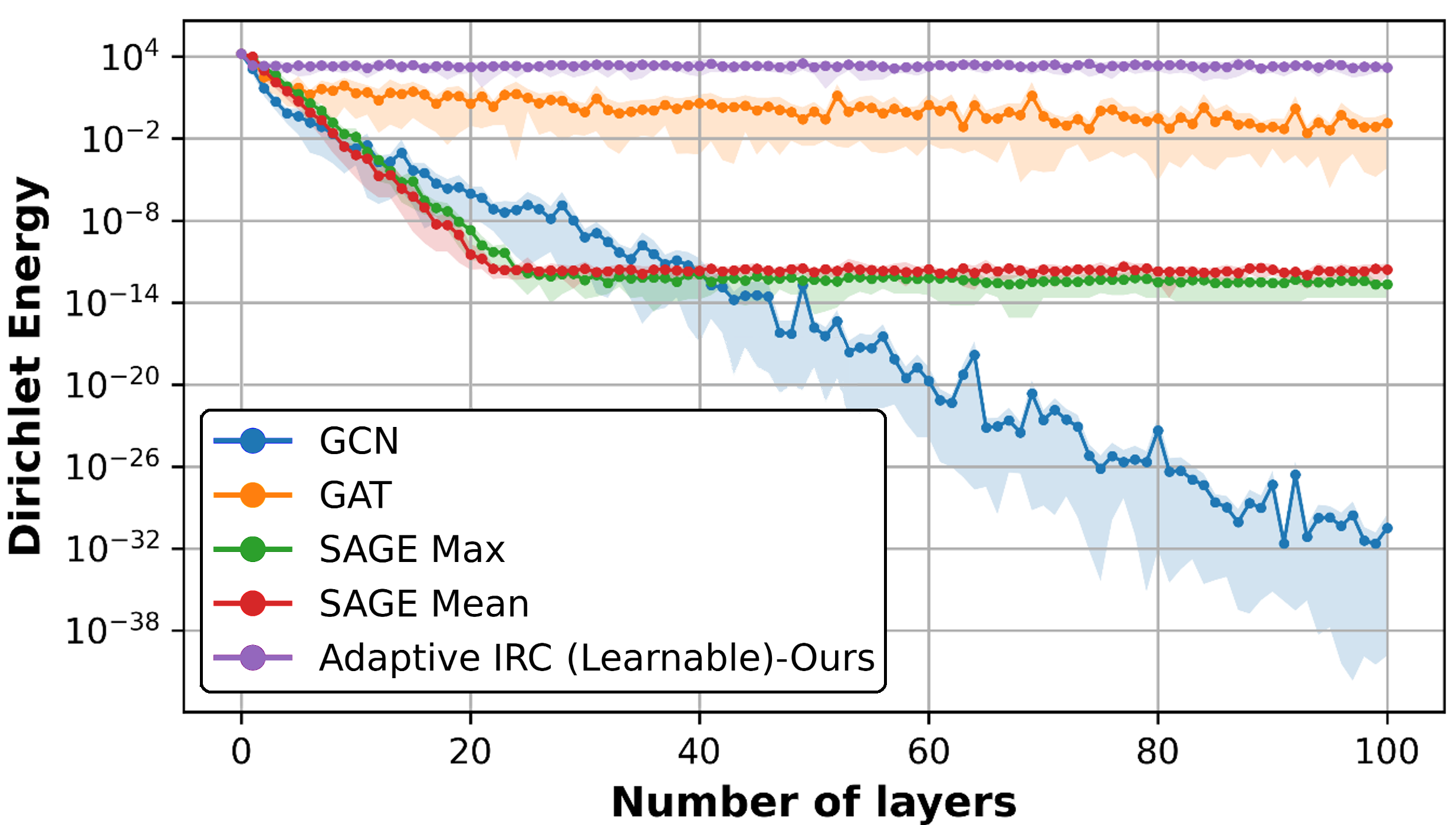}
    \caption{Dirichlet energy (log scale)   
    for output of GCN, GAT, GraphSAGE (mean and max), and learnable adaptive IRC with varying numbers of layers.}
    \label{dirichlet_energy}
\end{figure}

\begin{figure*}[htbp]
\includegraphics[width=2.2\columnwidth]{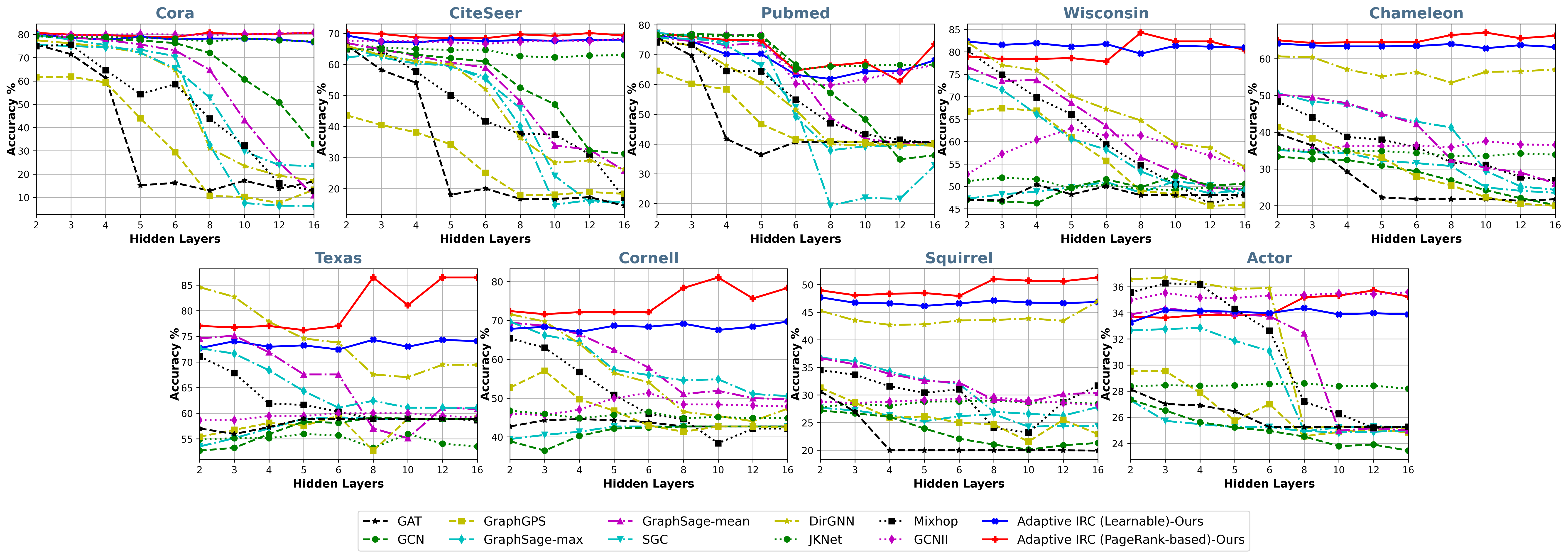}
    \caption{Performance across depths. Adaptive IRC (learnable and PageRank-based) remains accurate with increasing layers and outperforms other methods, except Actor, in shallow settings.}\label{depths}
\end{figure*}

\subsection{Node Classification}
We measure the performance of adaptive IRC on a range of common node classification benchmarks, using a collection of standard and state-of-the-art GNN models for comparison.

\paragraph{Other Methods.} 
We compare the adaptive IRC against some classical, but popular, GNN architectures like GCN~\cite{kipf2016semi}, GAT~\cite{velivckovic2017graph}, SGC~\cite{wu2019simplifying}, GraphSAGE~\cite{hamilton2017inductive_table}, Mixhop~\cite{abu2019mixhop}, JKNet~\cite{xu2018representation}, and GNCII~\cite{chen2020simple}. In addition, we compare our approach to some state-of-the-art GNN architectures, such as GraphGPS~\cite{rampavsek2022recipe} and DIRGNNs~\cite{rossi2024edge}. 

\paragraph{Our Methods.} 
We investigate two variants of the adaptive IRC model. First, the \textbf{learnable adaptive IRC}, where the diagonal residual strength matrix is optimized during training. While effective, it requires optimizing residual strengths for each node to achieve optimal results. A natural question is then: can we avoid learning these parameters and instead assign them heuristically to reduce time complexity while still benefiting from adaptivity? We observed (on small synthetic graphs) a positive correlation between a node's centrality, such as PageRank, and its residual strength; that is, the higher the centrality, the higher the residual strength. Based on this, we propose the \textbf{PageRank-based adaptive IRC}, where we compute PageRank scores for all nodes and assign $\lambda_{\max}$ to the top $k\%$ (with $k \in \{5, 7, 10\}$), and $\lambda_{\min}$ to the rest. \textit{This approach avoids learning residual strengths, reducing computational cost while continuing to perform comparably, and sometimes even better on average, than the learnable variant, as discussed below.}

\paragraph{Datasets. } 
To evaluate model performance across diverse scenarios, for the node classification task, we conduct experiments on homophilic graphs such as Cora~\cite{mccallum2000automating}, Citeseer~\cite{sen2008collective}, and Pubmed~\cite{namata2012query}, and on heterophilic graphs such as Texas, Wisconsin, and Cornell from WebK, as well as Chameleon, Squirrel, and Actor~\cite{rozemberczki2021multi,tang2009social}. Details of these datasets are summarized in Table~\ref{dataset_statistics_classification}. These datasets exhibit different degrees of homophily, quantified by the homophily ratio $H$ shown in Table~\ref{results_Heterophilic}. A higher ratio indicates greater homophily. For all datasets, we used $10$ random weight initializations. We followed the default train/validation/test splits for datasets provided by the \texttt{torch-geometric} library.

\begin{table}[htbp!]\small
\centering
\caption{Statistics of the datasets used in our experiments.}
\label{dataset_statistics_classification}
\renewcommand{\arraystretch}{1.2}
\begin{tabular}{|l|c|c|c|c|}
\hline
\textbf{Dataset} & \textbf{\#Nodes} & \textbf{\#Edges} & \textbf{\#Features} & \textbf{\#Classes} \\
\hline
Cora          & 2,708       & 10,556       & 1,433 & 7 \\
Citeseer      & 3,327       & 9,104        & 3,703 & 6 \\
PubMed        & 19,717      & 88,648       & 500   & 3 \\
Texas         & 183         & 309          & 1,703 & 5 \\
Wisconsin     & 251         & 499          & 1,703 & 5 \\
Cornell       & 183         & 295          & 1,703 & 5 \\
Chameleon     & 2,277       & 36,101       & 2,325 & 5 \\
Squirrel      & 5,201       & 217,073      & 2,089 & 5 \\
Actor         & 7,600       & 30,019       & 931   & 5 \\
\hline
\end{tabular}
\end{table}

\paragraph{Hyperparameters. }
For optimization, we employ the Adam optimizer with learning rates $\text{lr} \in \{10^{-2}, 10^{-3}\}$, weight decay set to $10^{-4}$, hidden dimensions selected from $\{64, 128\}$, the number of hidden layers from $\{2, 4\}$, a dropout rate of $0.4$, and residual strength assignments $\lambda_{\max} \in \{0.6, 0.7, 0.8\}$ and $\lambda_{\min} \in \{0.1, 0.2, 0.3\}$ for the PageRank-based version. For each model, the optimal hyperparameter configuration is determined via fine-tuning. 

\paragraph{Performance. }
Table~\ref{results_Heterophilic} summarizes the results of our experiments. Based on this table, except for the Actor dataset, the adaptive IRC consistently outperforms other methods. Between the two versions of our method, \textit{PageRank-based} performs as well as, and sometimes even better than, the learnable version. This is a surprising result, as one might expect the learnable version to achieve higher accuracy. Notably, there is a significant improvement in accuracy on heterophilic datasets compared to static residual connection models such as GCNII. To be specific, the PageRank-based IRC improves node classification accuracy (of the GCNII) by $17.5\%$, $18.6\%$, $25.4\%$, $28.8\%$, $20.2\%$, and $-1.4\%$ on the Texas, Wisconsin, Cornell, Chameleon, Squirrel, and Actor datasets, respectively, which shows the power of the adaptive IRC. 
This improvement stems from the adaptability of our method. Unlike GCNII, adaptive IRC adjusts the node-neighbor balance, preserving key node embeddings while leveraging neighbors when beneficial.

\paragraph{Depth. }
To demonstrate the adaptive IRC's versatility, we plot the accuracy of all models across all datasets as the number of hidden layers increases. As shown in Figure~\ref{depths}, while other methods degrade beyond four layers, both the learnable and PageRank-based variants of the adaptive IRC maintain stable and high performance even with deeper architectures. On the \textit{PubMed} dataset, accuracy drops from layer $5$ to $6$ across all models, likely due to the use of batch training in deeper networks. On the \textit{Actor} dataset, while some baselines perform better with shallow architectures (up to $5$ layers), our methods outperform them in deeper settings. The best hyperparameters were used for each model up to five layers. For models with more than five layers, we adopted the hyperparameters from the five-layer models.

\paragraph{The Best Hyperparameter of Adaptive IRC.}
The best hyperparameters from the experiments reported in Table~\ref{results_Heterophilic} are listed in Table~\ref{tab:hyperparams}. 
\begin{table}[ht]\small
\centering
\caption{The best hyperparameters for each dataset within the considered search space.}
\label{tab:hyperparams}
\begin{tabular}{|l|c|c|c|c|c|}
\hline
\textbf{Dataset} & lr  & d & k & \textbf{$\lambda_{\max}$} & \textbf{$\lambda_{\min}$} \\
\hline
\multicolumn{6}{|c|}{\textbf{Adaptive IRC (Learnable)}} \\
\hline
Actor      & 0.01  & 64  &  -- & -- & -- \\
Chameleon  & 0.001 & 128 &  -- & -- & -- \\
CiteSeer   & 0.001 & 64  &  -- & -- & -- \\
Cora       & 0.001  & 64 &  -- & -- & -- \\
Cornell    & 0.01  & 64  &  -- & -- & -- \\
Pubmed     & 0.01  & 128 &  -- & -- & -- \\
Squirrel   & 0.001 & 128 &  -- & -- & -- \\
Texas      & 0.01  & 128 &  -- & -- & -- \\
Wisconsin  & 0.01  & 64  &  -- & -- & -- \\
\hline
\multicolumn{6}{|c|}{\textbf{Adaptive IRC (PageRank-based)}} \\
\hline
Actor      & 0.01  & 64  & 0.07 & 0.6 & 0.1 \\
Chameleon  & 0.001  & 128 & 0.05 & 0.6 & 0.3 \\
CiteSeer   & 0.001 & 128  & 0.05 & 0.6 & 0.3 \\
Cora       & 0.001 & 128 & 0.10 & 0.7 & 0.3 \\
Cornell    & 0.01  & 64 & 0.07 & 0.8 & 0.1 \\
Pubmed     & 0.01  & 64  & 0.07 & 0.6 & 0.3 \\
Squirrel   & 0.001 & 128 & 0.05 & 0.6 & 0.3 \\
Texas      & 0.01  & 64 & 0.10 & 0.6 & 0.2 \\
Wisconsin  & 0.01  & 64 & 0.05 & 0.6 & 0.1 \\
\hline
\end{tabular}
\end{table}

\paragraph{Number of Parameters.}
Under our default 4-layer configuration on Cora, for example, the number of learnable parameters varies significantly across models.  
Simpler architectures, such as \textbf{GCN}, \textbf{SGC}, \textbf{JKNet}, and \textbf{GCNII}, each have around \textbf{100K} parameters, while \textbf{GAT} has substantially more at \textbf{840K}. \textbf{GraphSAGE-mean}, \textbf{GraphSAGE-max}, \textbf{GraphGPS}, and \textbf{MixHop} use more than \textbf{200K},  and \textbf{DirGNN} \textbf{140K}. Our \textbf{Adaptive IRC} models have higher counts: \textbf{483K} (Learnable) and \textbf{481K} (PageRank-based). Although the parameter gap between the two variants is small in these benchmarks (with a maximum number of nodes less than $19{,}717$), real-world graphs are often orders of magnitude larger. In such cases, the PageRank-based model avoids learning $n$ node-specific parameters, resulting in significant savings. 

\section{Conclusion and Future Work}
We study an adaptive residual scheme in which different nodes can have varying residual strengths. Our analysis demonstrates that this prevents oversmoothing (the Dirichlet energy remains non-zero) and maintains the embedding matrix's rank across layers. To improve the time complexity of our approach, we introduce a variant in which residual strengths are not learned but are set heuristically, a choice that performs as well as the learnable version. An interesting future direction is the \textit{adaptive selection of residual strengths} without learning, as in our PageRank-based approach, which outperforms the learnable variant and other residual-based GNNs. A practical strategy is to assign shared residual strengths to groups of structurally similar nodes, leading to more effective and interpretable designs.

\clearpage

\begin{thebibliography}{62}
\providecommand{\natexlab}[1]{#1}

\bibitem[{Abu-El-Haija et~al.(2019)Abu-El-Haija, Perozzi, Kapoor, Alipourfard, Lerman, Harutyunyan, Ver~Steeg, and Galstyan}]{abu2019mixhop}
Abu-El-Haija, S.; Perozzi, B.; Kapoor, A.; Alipourfard, N.; Lerman, K.; Harutyunyan, H.; Ver~Steeg, G.; and Galstyan, A. 2019.
\newblock Mixhop: Higher-order graph convolutional architectures via sparsified neighborhood mixing.
\newblock In \emph{International Conference on Machine Learning}, 21--29. PMLR.

\bibitem[{Avelar et~al.(2019)Avelar, Tavares, Gori, and Lamb}]{avelar2019discrete}
Avelar, P.~H.; Tavares, A.~R.; Gori, M.; and Lamb, L.~C. 2019.
\newblock Discrete and continuous deep residual learning over graphs.
\newblock \emph{arXiv preprint arXiv:1911.09554}.

\bibitem[{Cai and Wang(2020)}]{cai2020note}
Cai, C.; and Wang, Y. 2020.
\newblock A note on over-smoothing for graph neural networks.
\newblock \emph{arXiv preprint arXiv:2006.13318}.

\bibitem[{Chamberlain et~al.(2021)Chamberlain, Rowbottom, Gorinova, Bronstein, Webb, and Rossi}]{chamberlain2021grand}
Chamberlain, B.; Rowbottom, J.; Gorinova, M.~I.; Bronstein, M.; Webb, S.; and Rossi, E. 2021.
\newblock Grand: Graph neural diffusion.
\newblock In \emph{International Conference on Machine Learning}, 1407--1418. PMLR.

\bibitem[{Chen et~al.(2020{\natexlab{a}})Chen, Lin, Li, Li, Zhou, and Sun}]{chen2020measuring}
Chen, D.; Lin, Y.; Li, W.; Li, P.; Zhou, J.; and Sun, X. 2020{\natexlab{a}}.
\newblock Measuring and relieving the over-smoothing problem for graph neural networks from the topological view.
\newblock In \emph{Proceedings of the AAAI Conference on Artificial Intelligence}, volume~34, 3438--3445.

\bibitem[{Chen et~al.(2020{\natexlab{b}})Chen, Wei, Huang, Ding, and Li}]{chen2020simple}
Chen, M.; Wei, Z.; Huang, Z.; Ding, B.; and Li, Y. 2020{\natexlab{b}}.
\newblock Simple and deep graph convolutional networks.
\newblock In \emph{International Conference on Machine Learning}, 1725--1735. PMLR.

\bibitem[{Chen et~al.(2018)Chen, Rubanova, Bettencourt, and Duvenaud}]{chen2018neural}
Chen, R.~T.; Rubanova, Y.; Bettencourt, J.; and Duvenaud, D.~K. 2018.
\newblock Neural ordinary differential equations.
\newblock \emph{Advances in Neural Information Processing Systems}, 31.

\bibitem[{Chen et~al.(2023)Chen, Luo, Tang, Yang, Qiu, Wang, and Cao}]{chen2023lsgnn}
Chen, Y.; Luo, Y.; Tang, J.; Yang, L.; Qiu, S.; Wang, C.; and Cao, X. 2023.
\newblock LSGNN: Towards General Graph Neural Network in Node Classification by Local Similarity.
\newblock In \emph{Proceedings of the Thirty-Second International Joint Conference on Artificial Intelligence}, 3550--3558.

\bibitem[{Chen et~al.(2025)Chen, Lin, Chen, Polyanskiy, and Rigollet}]{chen2025residual}
Chen, Z.; Lin, Z.; Chen, S.; Polyanskiy, Y.; and Rigollet, P. 2025.
\newblock Residual connections provably mitigate oversmoothing in graph neural networks.
\newblock \emph{arXiv e-prints}, arXiv--2501.

\bibitem[{Chung(1997)}]{chung1997spectral}
Chung, F.~R. 1997.
\newblock \emph{Spectral graph theory}, volume~92.
\newblock American Mathematical Soc.

\bibitem[{Daneshmand et~al.(2020)Daneshmand, Kohler, Bach, Hofmann, and Lucchi}]{daneshmand2020batch}
Daneshmand, H.; Kohler, J.; Bach, F.; Hofmann, T.; and Lucchi, A. 2020.
\newblock Batch normalization provably avoids ranks collapse for randomly initialised deep networks.
\newblock \emph{Advances in Neural Information Processing Systems}, 33: 18387--18398.

\bibitem[{Davydov and Safarpoor(2021)}]{davydov2021meshless}
Davydov, O.; and Safarpoor, M. 2021.
\newblock A meshless finite difference method for elliptic interface problems based on pivoted QR decomposition.
\newblock \emph{Applied Numerical Mathematics}, 161: 489--509.

\bibitem[{Defferrard, Bresson, and Vandergheynst(2016)}]{defferrard2016convolutional}
Defferrard, M.; Bresson, X.; and Vandergheynst, P. 2016.
\newblock Convolutional neural networks on graphs with fast localized spectral filtering.
\newblock \emph{Advances in Neural Information Processing Systems}, 29.

\bibitem[{Dehkordi and Zehmakan(2025)}]{dehkordi2025graph}
Dehkordi, A.~S.; and Zehmakan, A.~N. 2025.
\newblock Graph-based Fake Account Detection: A Survey.
\newblock \emph{arXiv preprint arXiv:2507.06541}.

\bibitem[{Dong, Cordonnier, and Loukas(2021)}]{dong2021attention}
Dong, Y.; Cordonnier, J.-B.; and Loukas, A. 2021.
\newblock Attention is not all you need: Pure attention loses rank doubly exponentially with depth.
\newblock In \emph{International Conference on Machine Learning}, 2793--2803. PMLR.

\bibitem[{Friedkin and Johnsen(1990)}]{friedkin1990social}
Friedkin, N.~E.; and Johnsen, E.~C. 1990.
\newblock Social influence and opinions.
\newblock \emph{Journal of Mathematical Sociology}, 15(3-4): 193--206.

\bibitem[{Gallicchio and Micheli(2020)}]{gallicchio2020fast}
Gallicchio, C.; and Micheli, A. 2020.
\newblock Fast and deep graph neural networks.
\newblock In \emph{Proceedings of the AAAI Conference on Artificial Intelligence}, volume~34, 3898--3905.

\bibitem[{Gasteiger, Bojchevski, and G{\"u}nnemann(2019)}]{gasteiger2018predict}
Gasteiger, J.; Bojchevski, A.; and G{\"u}nnemann, S. 2019.
\newblock Predict then Propagate: Graph Neural Networks meet Personalized PageRank.
\newblock In \emph{International Conference on Learning Representations}.

\bibitem[{Gilmer et~al.(2017)Gilmer, Schoenholz, Riley, Vinyals, and Dahl}]{gilmer2017neural}
Gilmer, J.; Schoenholz, S.~S.; Riley, P.~F.; Vinyals, O.; and Dahl, G.~E. 2017.
\newblock Neural message passing for quantum chemistry.
\newblock In \emph{International Conference on Machine Learning}, 1263--1272. PMLR.

\bibitem[{{Hamilton et al.}(2017)}]{hamilton2017inductive_table}
{Hamilton et al.} 2017.
\newblock Inductive representation learning on large graphs.
\newblock \emph{Advances in Neural Information Processing Systems}, 30.

\bibitem[{He et~al.(2016)He, Zhang, Ren, and Sun}]{he2016deep}
He, K.; Zhang, X.; Ren, S.; and Sun, J. 2016.
\newblock Deep residual learning for image recognition.
\newblock In \emph{Proceedings of the IEEE Conference on Computer Vision and Pattern Recognition}, 770--778.

\bibitem[{Hevapathige, Wang, and Zehmakan(2025)}]{hevapathige2025deepsn}
Hevapathige, A.; Wang, Q.; and Zehmakan, A.~N. 2025.
\newblock DeepSN: A Sheaf Neural Framework for Influence Maximization.
\newblock In \emph{Proceedings of the AAAI Conference on Artificial Intelligence}, volume~39, 17177--17185.

\bibitem[{Hevapathige, Wijesinghe, and Zehmakan(2025)}]{hevapathige2025graph}
Hevapathige, A.; Wijesinghe, A.; and Zehmakan, A.~N. 2025.
\newblock Graph Neural Diffusion via Generalized Opinion Dynamics.
\newblock \emph{arXiv preprint arXiv:2508.11249}.

\bibitem[{Horn and Johnson(2012)}]{horn2012matrix}
Horn, R.~A.; and Johnson, C.~R. 2012.
\newblock \emph{Matrix analysis}.
\newblock Cambridge University Press.

\bibitem[{Huang and Carley(2019)}]{huang2019residual}
Huang, B.; and Carley, K.~M. 2019.
\newblock Residual or gate? towards deeper graph neural networks for inductive graph representation learning.
\newblock \emph{arXiv preprint arXiv:1904.08035}.

\bibitem[{Kipf and Welling(2017)}]{kipf2016semi}
Kipf, T.~N.; and Welling, M. 2017.
\newblock Semi-Supervised Classification with Graph Convolutional Networks.
\newblock In \emph{International Conference on Learning Representations}.

\bibitem[{Li et~al.(2019)Li, Muller, Thabet, and Ghanem}]{li2019deepgcns}
Li, G.; Muller, M.; Thabet, A.; and Ghanem, B. 2019.
\newblock DeepGCNs: Can GCNs go as deep as CNNs?
\newblock In \emph{Proceedings of the IEEE/CVF International Conference on Computer Vision}, 9267--9276.

\bibitem[{Li et~al.(2024)Li, Zhang, Xu, Chen, Guo, and Fu}]{li2024curriculum}
Li, J.; Zhang, Q.; Xu, S.; Chen, X.; Guo, L.; and Fu, Y.-G. 2024.
\newblock Curriculum-enhanced residual soft an-isotropic normalization for over-smoothness in deep GNNs.
\newblock In \emph{Proceedings of the AAAI Conference on Artificial Intelligence}, volume~38, 13528--13536.

\bibitem[{Li, Han, and Wu(2018)}]{li2018deeper}
Li, Q.; Han, Z.; and Wu, X.-M. 2018.
\newblock Deeper insights into graph convolutional networks for semi-supervised learning.
\newblock In \emph{Proceedings of the AAAI conference on Artificial Intelligence}, volume~32.

\bibitem[{Liu, Gao, and Ji(2020)}]{liu2020towards}
Liu, M.; Gao, H.; and Ji, S. 2020.
\newblock Towards deeper graph neural networks.
\newblock In \emph{Proceedings of the 26th ACM SIGKDD International Conference on Knowledge Discovery \& Data Mining}, 338--348.

\bibitem[{Liu et~al.(2021)Liu, Ding, Jin, Xu, Ma, Liu, and Tang}]{liu2021graph}
Liu, X.; Ding, J.; Jin, W.; Xu, H.; Ma, Y.; Liu, Z.; and Tang, J. 2021.
\newblock Graph neural networks with adaptive residual.
\newblock \emph{Advances in Neural Information Processing Systems}, 34: 9720--9733.

\bibitem[{McCallum et~al.(2000)McCallum, Nigam, Rennie, and Seymore}]{mccallum2000automating}
McCallum, A.~K.; Nigam, K.; Rennie, J.; and Seymore, K. 2000.
\newblock Automating the construction of internet portals with machine learning.
\newblock \emph{Information Retrieval}, 3: 127--163.

\bibitem[{Namata et~al.(2012)Namata, London, Getoor, Huang, and Edu}]{namata2012query}
Namata, G.; London, B.; Getoor, L.; Huang, B.; and Edu, U. 2012.
\newblock Query-driven active surveying for collective classification.
\newblock In \emph{10th International Workshop on Mining and Learning with Graphs}, volume~8, 1.

\bibitem[{Oono and Suzuki(2020)}]{oono2019graph}
Oono, K.; and Suzuki, T. 2020.
\newblock Graph Neural Networks Exponentially Lose Expressive Power for Node Classification.
\newblock In \emph{International Conference on Learning Representations}.

\bibitem[{Panagopoulos, Nikolentzos, and Vazirgiannis(2021)}]{panagopoulos2021transfer}
Panagopoulos, G.; Nikolentzos, G.; and Vazirgiannis, M. 2021.
\newblock Transfer graph neural networks for pandemic forecasting.
\newblock In \emph{Proceedings of the AAAI Conference on Artificial Intelligence}, volume~35, 4838--4845.

\bibitem[{Poli et~al.(2019)Poli, Massaroli, Park, Yamashita, Asama, and Park}]{poli2019graph}
Poli, M.; Massaroli, S.; Park, J.; Yamashita, A.; Asama, H.; and Park, J. 2019.
\newblock Graph neural ordinary differential equations.
\newblock \emph{arXiv preprint arXiv:1911.07532}.

\bibitem[{Ramp{\'a}{\v{s}}ek et~al.(2022)Ramp{\'a}{\v{s}}ek, Galkin, Dwivedi, Luu, Wolf, and Beaini}]{rampavsek2022recipe}
Ramp{\'a}{\v{s}}ek, L.; Galkin, M.; Dwivedi, V.~P.; Luu, A.~T.; Wolf, G.; and Beaini, D. 2022.
\newblock Recipe for a general, powerful, scalable graph transformer.
\newblock \emph{Advances in Neural Information Processing Systems}, 35: 14501--14515.

\bibitem[{Rossi et~al.(2024)Rossi, Charpentier, Di~Giovanni, Frasca, G{\"u}nnemann, and Bronstein}]{rossi2024edge}
Rossi, E.; Charpentier, B.; Di~Giovanni, F.; Frasca, F.; G{\"u}nnemann, S.; and Bronstein, M.~M. 2024.
\newblock Edge directionality improves learning on heterophilic graphs.
\newblock In \emph{Learning on Graphs Conference}, 25--1. PMLR.

\bibitem[{Rozemberczki, Allen, and Sarkar(2021)}]{rozemberczki2021multi}
Rozemberczki, B.; Allen, C.; and Sarkar, R. 2021.
\newblock Multi-scale attributed node embedding.
\newblock \emph{Journal of Complex Networks}, 9(2): cnab014.

\bibitem[{Rusch, Bronstein, and Mishra(2023)}]{rusch2023survey}
Rusch, T.~K.; Bronstein, M.~M.; and Mishra, S. 2023.
\newblock A Survey on Oversmoothing in Graph Neural Networks.
\newblock \emph{SAM Research Report}, 2023.

\bibitem[{Rusch et~al.(2022)Rusch, Chamberlain, Rowbottom, Mishra, and Bronstein}]{rusch2022graph}
Rusch, T.~K.; Chamberlain, B.; Rowbottom, J.; Mishra, S.; and Bronstein, M. 2022.
\newblock Graph-coupled oscillator networks.
\newblock In \emph{International Conference on Machine Learning}, 18888--18909. PMLR.

\bibitem[{Scholkemper et~al.(2025)Scholkemper, Wu, Jadbabaie, and Schaub}]{scholkemper2024residual}
Scholkemper, M.; Wu, X.; Jadbabaie, A.; and Schaub, M.~T. 2025.
\newblock Residual Connections and Normalization Can Provably Prevent Oversmoothing in GNNs.
\newblock In \emph{Proceedings of the Twelfth International Conference on Learning Representations}.

\bibitem[{Sehanobish, Ravindra, and van Dijk(2021)}]{sehanobish2021gaining}
Sehanobish, A.; Ravindra, N.; and van Dijk, D. 2021.
\newblock Gaining insight into SARS-CoV-2 infection and COVID-19 severity using self-supervised edge features and Graph Neural Networks.
\newblock In \emph{Proceedings of the AAAI Conference on Artificial Intelligence}, volume~35, 4864--4873.

\bibitem[{Sen et~al.(2008)Sen, Namata, Bilgic, Getoor, Galligher, and Eliassi-Rad}]{sen2008collective}
Sen, P.; Namata, G.; Bilgic, M.; Getoor, L.; Galligher, B.; and Eliassi-Rad, T. 2008.
\newblock Collective classification in network data.
\newblock \emph{AI magazine}, 29(3): 93--93.

\bibitem[{Shirzadi, Cruciani, and Zehmakan(2025)}]{shirzadi2025opinion}
Shirzadi, M.; Cruciani, E.; and Zehmakan, A.~N. 2025.
\newblock Opinion Dynamics: A Comprehensive Overview.
\newblock \emph{arXiv preprint arXiv:2511.00401}.

\bibitem[{Shirzadi and Zehmakan(2025)}]{shirzadi2025stubborn}
Shirzadi, M.; and Zehmakan, A.~N. 2025.
\newblock Do stubborn users always cause more polarization and disagreement? a mathematical study.
\newblock In \emph{Proceedings of the Eighteenth ACM International Conference on Web Search and Data Mining}, 309--317.

\bibitem[{Tang et~al.(2009)Tang, Sun, Wang, and Yang}]{tang2009social}
Tang, J.; Sun, J.; Wang, C.; and Yang, Z. 2009.
\newblock Social influence analysis in large-scale networks.
\newblock In \emph{Proceedings of the 15th ACM SIGKDD International Conference on Knowledge Discovery and Data Mining}, 807--816.

\bibitem[{Thorpe et~al.(2022)Thorpe, Nguyen, Xia, Strohmer, Bertozzi, Osher, and Wang}]{thorpe2022grand++}
Thorpe, M.; Nguyen, T.; Xia, H.; Strohmer, T.; Bertozzi, A.; Osher, S.; and Wang, B. 2022.
\newblock GRAND++: Graph neural diffusion with a source term.
\newblock \emph{ICLR}.

\bibitem[{Trefethen and Bau(2022)}]{trefethen2022numerical}
Trefethen, L.~N.; and Bau, D. 2022.
\newblock \emph{Numerical linear algebra}.
\newblock SIAM.

\bibitem[{Van~Langendonck, Castell-Uroz, and Barlet-Ros(2024)}]{van2024towards}
Van~Langendonck, L.; Castell-Uroz, I.; and Barlet-Ros, P. 2024.
\newblock Towards a graph-based foundation model for network traffic analysis.
\newblock In \emph{Proceedings of the 3rd GNNet Workshop on Graph Neural Networking Workshop}, 41--45.

\bibitem[{Veli{\v{c}}kovi{\'c} et~al.(2018)Veli{\v{c}}kovi{\'c}, Cucurull, Casanova, Romero, Li{\`o}, and Bengio}]{velivckovic2017graph}
Veli{\v{c}}kovi{\'c}, P.; Cucurull, G.; Casanova, A.; Romero, A.; Li{\`o}, P.; and Bengio, Y. 2018.
\newblock Graph Attention Networks.
\newblock In \emph{International Conference on Learning Representations}.

\bibitem[{Wang et~al.(2019)Wang, Ying, Huang, and Leskovec}]{wang2019improving}
Wang, G.; Ying, R.; Huang, J.; and Leskovec, J. 2019.
\newblock Improving graph attention networks with large margin-based constraints.
\newblock \emph{arXiv preprint arXiv:1910.11945}.

\bibitem[{Wang et~al.(2022)Wang, Yi, Liu, Wang, and Jin}]{wang2022acmp}
Wang, Y.; Yi, K.; Liu, X.; Wang, Y.~G.; and Jin, S. 2022.
\newblock ACMP: Allen-Cahn message passing for graph neural networks with particle phase transition.
\newblock \emph{arXiv preprint arXiv:2206.05437}.

\bibitem[{Wu et~al.(2019)Wu, Souza, Zhang, Fifty, Yu, and Weinberger}]{wu2019simplifying}
Wu, F.; Souza, A.; Zhang, T.; Fifty, C.; Yu, T.; and Weinberger, K. 2019.
\newblock Simplifying graph convolutional networks.
\newblock In \emph{International Conference on Machine Learning}, 6861--6871. PMLR.

\bibitem[{Wu et~al.(2023{\natexlab{a}})Wu, Yang, Zhao, He, Wipf, and Yan}]{wu2023difformer}
Wu, Q.; Yang, C.; Zhao, W.; He, Y.; Wipf, D.; and Yan, J. 2023{\natexlab{a}}.
\newblock DIFFormer: Scalable (Graph) Transformers Induced by Energy Constrained Diffusion.
\newblock In \emph{The Eleventh International Conference on Learning Representations}.

\bibitem[{Wu et~al.(2023{\natexlab{b}})Wu, Ajorlou, Wu, and Jadbabaie}]{wu2023demystifying}
Wu, X.; Ajorlou, A.; Wu, Z.; and Jadbabaie, A. 2023{\natexlab{b}}.
\newblock Demystifying oversmoothing in attention-based graph neural networks.
\newblock \emph{Advances in Neural Information Processing Systems}, 36: 35084--35106.

\bibitem[{Wu et~al.(2022)Wu, Chen, Wang, and Jadbabaie}]{wu2022non}
Wu, X.; Chen, Z.; Wang, W.; and Jadbabaie, A. 2022.
\newblock A non-asymptotic analysis of oversmoothing in graph neural networks.
\newblock \emph{arXiv preprint arXiv:2212.10701}.

\bibitem[{Xu et~al.(2018)Xu, Li, Tian, Sonobe, Kawarabayashi, and Jegelka}]{xu2018representation}
Xu, K.; Li, C.; Tian, Y.; Sonobe, T.; Kawarabayashi, K.-i.; and Jegelka, S. 2018.
\newblock Representation learning on graphs with jumping knowledge networks.
\newblock In \emph{International Conference on Machine Learning}, 5453--5462. PMLR.

\bibitem[{Yang et~al.(2022)Yang, Peng, Zhou, Niu, Gu, Wang, Guo, He, and Cao}]{yang2022difference}
Yang, L.; Peng, W.; Zhou, W.; Niu, B.; Gu, J.; Wang, C.; Guo, Y.; He, D.; and Cao, X. 2022.
\newblock Difference residual graph neural networks.
\newblock In \emph{Proceedings of the 30th ACM international Conference on Multimedia}, 3356--3364.

\bibitem[{Ye et~al.(2021)Ye, Sun, Du, Fu, and Xiong}]{ye2021coupled}
Ye, J.; Sun, L.; Du, B.; Fu, Y.; and Xiong, H. 2021.
\newblock Coupled layer-wise graph convolution for transportation demand prediction.
\newblock In \emph{Proceedings of the AAAI Conference on Artificial Intelligence}, volume~35, 4617--4625.

\bibitem[{Zhang et~al.(2025)Zhang, Deidda, Higham, and Tudisco}]{zhang2025rethinking}
Zhang, K.; Deidda, P.; Higham, D.; and Tudisco, F. 2025.
\newblock Rethinking Oversmoothing in Graph Neural Networks: A Rank-Based Perspective.
\newblock \emph{arXiv preprint arXiv:2502.04591}.

\bibitem[{Zhang et~al.(2023)Zhang, Yan, He, Li, and Chu}]{zhang2023drgcn}
Zhang, L.; Yan, X.; He, J.; Li, R.; and Chu, W. 2023.
\newblock Drgcn: Dynamic evolving initial residual for deep graph convolutional networks.
\newblock In \emph{Proceedings of the AAAI conference on artificial intelligence}, volume~37, 11254--11261.

\end{thebibliography}



\vspace{-1cm}

\end{document}